
\documentclass[10pt,twocolumn,letterpaper]{article}
\usepackage{authblk}
\usepackage[accsupp]{axessibility}
\usepackage[pagenumbers]{wacv} 
\usepackage{graphicx}
\usepackage{amsmath}
\usepackage{amssymb}
\usepackage{breqn}
\usepackage{booktabs}
\usepackage{amsthm}
\usepackage{algorithm}
\usepackage{soul}
\usepackage{algpseudocode}
\usepackage{multirow}
\usepackage{booktabs}
\usepackage{enumitem}
\usepackage{mathtools}
\newtheorem{theorem}{Theorem}
\newtheorem{prop}{Proposition}
\newtheorem{assumption}{A}
\newtheorem{lemma}{Lemma}
%
\usepackage[pagebackref,breaklinks,colorlinks]{hyperref}
\usepackage{adjustbox}

\usepackage[capitalize]{cleveref}
\crefname{section}{Sec.}{Secs.}
\Crefname{section}{Section}{Sections}
\Crefname{table}{Table}{Tables}
\crefname{table}{Tab.}{Tabs.}


\begin{document}

\title{Minimizing Layerwise Activation Norm Improves \\Generalization in   Federated Learning \vspace*{-0.7cm}}
\makeatletter
\renewcommand\AB@affilsepx{, \protect\Affilfont}
\makeatother
\author[1]{M. Yashwanth}
\author[2]{Gaurav Kumar Nayak}
\author[1]{Harsh Rangwani}
\author[3]{Arya Singh}
\author[1]{\\R. Venkatesh Babu}
\author[1]{Anirban Chakraborty\vspace*{-0.1in}}
\vspace{-0.25in}
{\affil[1]{Indian Institute of Science, Bangalore} 
\affil[2]{University of Central Florida}
\affil[3]{BITS Pilani \vspace*{-0.1in}}}
\vspace{-0.1in}
\affil[1] {\tt\small{\{yashwanthm,harshr,venky,anirban\}@iisc.ac.in}}
\affil[2] {\tt\small{gauravkumar.nayak@ucf.edu}\vspace{0.1in}}
\affil[3] {\tt\small{f20180762g@alumni.bits-pilani.ac.in}\vspace*{-0.1in}}
\maketitle
\begin{abstract}
Federated Learning (FL) is an emerging machine learning framework that enables multiple clients (coordinated by a server) to collaboratively train a global model by aggregating the locally trained models without sharing any client's training data. It has been observed in recent works that learning in a federated manner may lead the aggregated global model to converge to a `sharp minimum' thereby adversely affecting the generalizability of this FL-trained model. 
Therefore, in this work, we aim to improve the generalization performance of models trained in a federated setup by introducing a `flatness' constrained FL optimization problem. This flatness constraint is imposed on the top eigenvalue of the Hessian computed from the training loss. 
As each client trains a model on its local data, we further re-formulate this complex problem utilizing the client loss functions and propose a new computationally efficient regularization technique\footnote{\url{https://github.com/vcl-iisc/fedMAN.git}}, dubbed `\textit{MAN},' which \textbf{M}inimizes \textbf{A}ctivation's \textbf{N}orm of each layer on client-side models. We also theoretically show that minimizing the activation norm reduces the top eigenvalue of the layer-wise Hessian of the client's loss, which in turn decreases the overall Hessian's top eigenvalue, ensuring convergence to a flat minimum. We apply our proposed flatness-constrained optimization to the existing FL techniques and obtain significant improvements, thereby establishing new state-of-the-art.
\end{abstract}
\vspace{-0.2in}
\section{Introduction}
\label{sec:intro}
Federated Learning (FL), first introduced in~\cite{mcmahan2017communication}, is a distributed machine learning paradigm that involves multiple clients learning a shared model under the coordination of a central server. In FL, a client cannot send the training data to the server due to privacy concerns and communication overheads. Instead, clients share the parameters of models trained on their local data with the server, which aggregates these models with the objective of achieving better generalization on the overall data distribution across all clients. FL's privacy-preserving nature has made it increasingly popular in various domains, including smartphones~\cite{47586,ramaswamy2019federated}, Internet of Things~\cite{zhang2022federated,mills2019communication} and the healthcare industry~\cite{rieke2020future,xu2021federated,qayyum2022collaborative}.
\par

FedAvg introduced in~\cite{mcmahan2017communication} is a popular algorithm for federated training in which each client trains its local model for multiple epochs before sharing it with the server. However, FedAvg often exhibits convergence issues~\cite{karimireddy2020scaffold}, particularly when dealing with non-iid data. Numerous approaches have been proposed to overcome the limitations of FedAvg, particularly when dealing with non-iid distributions such as SCAFFOLD~\cite{karimireddy2020scaffold}, FedDyn~\cite{acar2021federated}, and FedDC~\cite{Gao_2022_CVPR}. These methods aim to address the problem from an optimization perspective by seeking a better minimum for the global model via minimization of the training objective. 
\par


  In centralized learning literature~\cite{yao2020pyhessian,keskar2017on, Sankar2021ADL,foret2021sharpnessaware}, a strong connection has been observed between generalization attained by the trained models and the sharpness of the loss surface where the model parameters converge at the end of training. Methods such as SAM~\cite{foret2021sharpnessaware} are proposed to attain flat minima for better generalization. 
 Recent studies on generalization under FL setup, such as FedSAM/ASAM~\cite{caldarola2022imp,qu2022generalized}, have shown that federated training often converges to a sharp minimum, which can negatively impact the generalization performance of the global model. This happens as each client participating in FL has a limited amount of data, and different clients can have different data distribution, eventually leading to overfitting of the models and sharp minimum.
 To solve this problem, methods such as FedSAM use SAM~\cite{foret2021sharpnessaware} optimizer on the client models to avoid the sharp minimum and get better generalized global models. 
 However, there are still issues with these methods-they require additional forward and backward passes as their update rules are based on gradient ascent followed by gradient descent. Methods such as FedDC~\cite{Gao_2022_CVPR} seem to perform better than  SAM-based FL optimizers such as FedSAM. Also FedSAM cannot be easily integrated atop the existing state-of-the-art optimizers such as FedDC to further enhance their performance without impacting the convergence guarantees. These methods also inherently assume that using SAM updates on the local models can provide a global model with flat minima. 
 \par
  
\par
To overcome the aforementioned limitations and motivated by the findings of centralized training, we present a novel Federated Learning (FL) optimization approach that integrates a flatness constraint as a regularizer on top of the FL objective. The top eigenvalue and trace of the Hessian of the loss are typical indicators of flatness~\cite{yao2020pyhessian,keskar2017on,foret2021sharpnessaware,Sankar2021ADL}. The lower values of the top eigenvalue/trace are desired for better performance. Ideally, our objective should be to achieve convergence towards a flat minimum while maintaining a low training loss. Thus, we directly aim to minimize the top eigenvalue of the global model along with the training loss. This facilitates convergence of the global model towards a flat minimum while simultaneously minimizing the training loss. By our choice of regularizer, we theoretically motivate that global model flatness can be achieved by local (client) model flatness, and develop a computationally efficient flatness metric that can minimize the top eigenvalue and be easily optimized by any of the state-of-the-art FL optimizers. As a result, our proposed approach can effectively improve the generalization performance of the trained global model. Our method also mitigates the need for multiple gradient updates as required by SAM-based FL methods, as the flatness constraints are integrated into the loss.


Further, we theoretically establish that the top eigenvalue of the Hessian of the client's loss function can be minimized by \textbf{M}inimizing the  \textbf{A}ctivation's \textbf{N}orm (MAN) of each layer in the client models (see Sec~\ref{theoritical_analysis}). Our empirical analysis in Sec.~\ref{emp_hess_analysis} shows that our proposed flatness constraint leads to a global model with lower top eigenvalue and the trace of the Hessian of loss, compared to the case without the constraint. Since our method achieves the flatness by minimizing the activation norm. Our proposed regularizer (MAN) can be elegantly combined with existing FL techniques such as FedAVG~\cite{mcmahan2017communication}, FedDC~\cite{Gao_2022_CVPR}, FedDyn~\cite{acar2021federated} and also SAM-based FL methods such as  FedSAM/ASAM~\cite{caldarola2022imp}, FedSpeed~\cite{sun2023fedspeed} etc. and further improve each of their performance to surpass the state-of-the-art on multiple datasets, as well as across the data distributions. For example, we improve the performance of FedDC by $3.2\%$ on CIFAR-100, and by $4.3\%$ on Tiny-ImageNet. For a detailed comparison, see Table~\ref{table_acc}. The key contributions of this work are:
\begin{itemize} 
    
    \item We propose a novel FL optimization problem that incorporates a flatness constraint as a regularizer to enhance generalization. Specifically, we minimize the top eigenvalue of the Hessian of the global model's training loss.
   
    \item 
    We present theoretical evidence that the top eigenvalue of the layerwise Hessian of client's loss can be minimized by minimizing its layer-wise activation norm. The computational complexity of our regularizer is very low compared to the existing sharpness-aware FL optimizers such as FedSAM/ASAM and FedSpeed.
    
    \item Unlike previous works that combine regularization schemes with only the FedAvg algorithm, our approach can be easily integrated atop any state-of-the-art methods, such as FedDC, FedDyn, FedSpeed, FedSAM/ASAM to further enhance their performance.
    
    \item We evaluate the effectiveness of our method on CIFAR-100 and Tiny-ImageNet datasets. Our method achieves significant improvements in FL-trained global model accuracy and communication cost reduction compared to competing methods. 
    
\end{itemize}

\section{Related Work}
FL was introduced in \cite{mcmahan2017communication} which is a generalized version of local SGD \cite{stich2018local}. This allows to increase the local gradient updates on the clients before sharing with server, which shows a significant reduction in communication costs in the iid setting but not in the non-iid setting. 
Methods such as MOON~\cite{li2021model} and FedProx~\cite{li2020federated} introduce a regularizer to maintain proximity to the global model while training with the local objective. The key idea is to prevent client models from drifting too far from the global model, thereby facilitating faster convergence or achieving the desired accuracy with minimal communication rounds. 
In~\cite{Mendieta_2022_CVPR}, a regularization strategy is proposed that minimizes the Lipschiltz constant, which involves high compute.  
SCAFFOLD \cite{karimireddy2020scaffold} introduced the gradient correction to minimize client drift. Later FedDyn \cite{acar2021federated} improved upon this by introducing the dynamic regularization term. This was improved in FedDC~\cite{Gao_2022_CVPR} by local drift correction. In \cite{zhao2018federated}, the data-sharing strategy was proposed to improve the performance, which might violate the privacy in FL. Distillation strategies are proposed in FedNTD~\cite{lee2021preservation}, FedSSD~\cite{he2022learning} and FedCAD~\cite{he2022class} to address the problems associated with non-iid settings. However, these methods are computationally intensive, and FedCAD also requires access to external data. In \cite{mishchenko2019distributed}, models were compressed to reduce communication cost. One-shot methods, e.g.,~\cite{zhou2020distilled} were also proposed to reduce communication cost but these incur heavy computation on client.

Recent works such as~\cite{caldarola2022imp,qu2022generalized} showed that Federated training converges to a sharp minimum. To avoid it, these methods seek flat minimum for client models using SAM~\cite{foret2021sharpnessaware}, ASAM~\cite{kwon2021asam} and SWA~\cite{izmailov2018averaging} in the FL setting. Sharpness Aware minimization (SAM) is an optimization technique that improves the generalization of models by converging to flat minima across various tasks~\cite{foret2021sharpnessaware, rangwani2022closer, rangwani2022escaping}. Recently, FedSpeed~\cite{sun2023fedspeed} has been proposed, which can be viewed as a combination of SAM with FedDyn~\cite{acar2021federated}. However, there exist several limitations of SAM-based FL methods. The lack of theoretical results for the convergence of ASAM/SWA in FL is one of them. Moreover, SAM-based FL methods necessitate the tuning of multiple hyper-parameters. In our framework, we address these limitations by introducing a novel problem formulation that helps theoretically connect the flat minimum of each client model to that of the global model as a part of our optimization framework. 
We will now explain our method in detail in the next section.
\section{Method}
We first describe the FL optimization problem involving multiple clients and a single server in Sec.~\ref{psetup}. Next, we describe our problem formulation incorporating flatness constraints along with the minimization of overall training loss (cross-entropy). Subsequently, we re-write the objective in terms of the each client's training loss and top eigenvalue of the Hessian of the client's loss in Sec.~\ref{prop_method_flat_constraint}. We offer theoretical insights in Sec.~\ref{overall_layer} and Sec.~\ref{theoritical_analysis}  to establish the relationship between activation norms and the layer-wise Hessian eigenvalues, which explains the effectiveness of our method. Based on our theoretical insights, we minimize activation norm along with CE loss to attain flatness in Sec.~\ref{man_regularizer}. In Sec.~\ref{emp_hess_analysis}, we present empirical evidence that validates our method by showing that it indeed minimizes the top eigenvalue/trace of the Hessian of the global model's training loss.
\subsection{Problem Setup}
\label{psetup}
In the FL optimization problem, we consider a scenario in which a single server interacts with $n$ clients or edge devices. It is further assumed that each client $k$ possesses $m_{k}$ training samples drawn iid from the data distribution $\mathcal{D}_{k}(x,y)$. The data distributions $\{\mathcal{D}_{k}(x,y)\}_{k=1}^n$ across the clients can be either iid or non-iid. 

\begin{equation}
\underset{\mathbf{w}}{\arg\min} \hspace{0.08in}l(\mathbf{w})
\label{fl_opt}
\end{equation}
\vspace{-0.1in}
where 
\begin{equation}
l(\mathbf{w}) =  \ \frac{1}{n} \sum_{k = 1}^{n}{L_{k}(\mathbf{w})}
\label{L_w_eqn}
\end{equation}
where $ L_{k}(\mathbf{w}) $ is the client specific objective function and $\mathbf{w}$ denotes model parameters. $L_{k}(\mathbf{w})$ is defined as follows.
\begin{equation}
L_k(\mathbf{w}) = \underset{x,y \in D_k}{\mathbb{E}}[l_{k}(\mathbf{w};(x,y))]   
\end{equation} 
Here, $l_k$ is cross-entropy loss. The expectation is approximated by averaging over training samples drawn from data distribution ($\mathcal{D}_{k}$) of the client $k$. Optimizing $L_k(\mathbf{w})$ directly (i.e., FedAvg) leads to solutions in a sharp minimum. The local models over-fit to client's data distribution and do not generalize well. To mitigate this, we introduce a novel regularizer that induces flatness to global model, which we elucidate in the next section.
\subsection{Proposed Method}
Minimizing the objective function in Eq.~\ref{fl_opt} directly, may lead to global model converging to the sharp minimum which can affect the generalization performance~\cite{caldarola2022imp,qu2022generalized}. To avoid this, we explicitly introduce the flatness constraints by minimizing the top eigenvalue of the Hessian of the training loss of the global model. Our approach is motivated by the works of centralized setup, where it has been shown that top eigenvalue of Hessian of loss is a typical indicator of flatness~\cite{yao2020pyhessian,keskar2017on,foret2021sharpnessaware, Sankar2021ADL}.     \label{prop_method}
\subsubsection{Problem Formulation with Flatness Constraint}
\label{prop_method_flat_constraint}
We now formulate the problem of FL as a constrained optimization problem with the flatness constraint.
We rewrite the Eq.~\ref{fl_opt} with the constraint as below
\begin{equation*}
\underset{\mathbf{w}}{\arg\min} \hspace{0.08in}l(\mathbf{w})
\end{equation*}
\vspace{-0.05in}
such that
\begin{equation}
 \lambda_{max}(\textbf{H}(l(\mathbf{w})))< \tau
 \label{sm_constraint}
\end{equation}
where $\lambda_{max}(\textbf{H}(l(\mathbf{w})))$
denotes the top eigenvalue of the Hessian of the loss $l(\mathbf{w})$.
\footnote{$\textbf{H}(l(\mathbf{w}))$ denotes the Hessian of the loss $l(\mathbf{w})$.}
We now translate the constraint in above problem using the penalized loss in Eq.~\ref{lagrang_opt}, with hyper-parameter $\zeta > 0$ balancing the loss and flatness.
\vspace{-0.02in}
\begin{equation}
\underset{\mathbf{w}}{\arg\min} \hspace{0.08in} \hat{l}(\mathbf{w}) =  l(\mathbf{w}) + \zeta \lambda_{max}(\textbf{H}(l(\mathbf{w})))
\label{lagrang_opt}
\end{equation}
\vspace{-0.01in}
Its hard to optimize the term $\lambda_{max}(\textbf{H}(l(\mathbf{w})))$ for two reasons. Firstly, it needs access to loss functions of all the clients which in FL is local to each client, hence not accessible. Secondly, it needs to compute the eigenvalue of the Hessian of the loss, which is computationally complex. 
To address the first issue, we simplify optimization in Eq.~\ref{lagrang_opt} by upper bounding the $\lambda_{max}(\textbf{H}(l(\mathbf{w})))$ as described below. 

As Hessian is Jacobian of the gradient, we get 
\begin{equation}
\textbf{H}(l(\mathbf{w})) = {\frac{1}{n}} {\sum_{k} \textbf{H}(L_k(\mathbf{w}))}
\label{distr_hess}
\end{equation}
We use the identity\footnote{$\lambda_{max}(\textbf{S}_1 + \textbf{S}_2) \leq  \lambda_{max}(\textbf{S}_1) + \lambda_{max}(\textbf{S}_2)$ where $\mathbf{S}_1$ and $\mathbf{S}_2$ are symmetric matrices.} in the Eq.~\ref{distr_hess} to get~\ref{eig_ineq_hess} 
\begin{equation}
\lambda_{max}(\textbf{H}(l(\mathbf{w}))) \leq {\frac{1}{n}} {\sum_{k} \lambda_{max}(\textbf{H}(L_k(\mathbf{w})))}
\label{eig_ineq_hess}
\end{equation}
Using inequality~\ref{eig_ineq_hess} and Eq.~\ref{L_w_eqn} we upper bound the loss $\hat{l}(\mathbf{w})$ in Eq~\ref{lagrang_opt} by the following. 
\begin{equation}
\tilde{l}(\mathbf{w}) = {\frac{1}{n}}\sum_{k}(L_k(\mathbf{w}) + \zeta \lambda_{max}(\textbf{H}(L_k(\mathbf{w}))))
\label{hess_loss_ub}
\end{equation}
i.e.,
$
\hat{l}(\mathbf{w}) \leq \tilde{l}(\mathbf{w})
$.
We now minimize the $\tilde{l}(\mathbf{w})$ as below 
\begin{equation}
\underset{\mathbf{w}}{\arg\min} \hspace{0.08in} \tilde{l}(\mathbf{w}) = {\frac{1}{n}}\sum_{k}L_k(\mathbf{w}) + \zeta \lambda_{max}(\textbf{H}(L_k(\mathbf{w})))
\label{clnt_obj_hess}
\end{equation}
Minimizing upper bound $\tilde{l}(\mathbf{w})$ will minimize the $\hat{l}(\mathbf{w})$, which is our true goal.
We therefore modified the problem in Eq.~\ref{fl_opt} by including the flatness constraints in Eq.~\ref{clnt_obj_hess}. Instead of just minimizing the $L_k(\mathbf{w})$ each client now minimizes the $L_k(\mathbf{w}) + \zeta \lambda_{max}(\textbf{H}(L_k(\mathbf{w})))$. The $\zeta$ trades off between the flatness and the training loss. We now address the second issue involving  the top eigenvalue of Hessian which is computationally complex by developing a low-cost metric to minimize the $\lambda_{max}(\textbf{H}(L_k(\mathbf{w})))$.
\subsubsection{Bounding the overall Hessian Eigenvalues with the Layer-Wise Hessian Eigenvalues}
\label{overall_layer}
According to the findings of~\cite{Sankar2021ADL}, during the training process the evolution of the top eigenvalue of the overall Hessian is similar to the top eigenvalue of layer-wise Hessian. By layer-wise Hessian, we mean the Hessian of loss is computed with respect to parameters of each layer. We explain the above behavior through the following result. 

\begin{theorem}
If $\mathbf{H}_{ll} \in R^{d_l}$  denotes the layer $l$ Hessian and $\mathbf{H} \in R^{d}$ denotes the over all Hessian and $\sum_{l=1}^{L}{d_l} = d$, where $L$ is the total number of layers. If the Hessian entries are bounded above we then have the following result.
$\lambda(\mathbf{H}) \in \cup_{l=1}^{L} [ \lambda_{min}(H_{ll}) - \mathcal{O}(max(d_l,{d-d_l})), \lambda_{max}(H_{ll}) + \mathcal{O}(max(d_l,{d-d_l}))] $
\label{hess_theorem0}
\end{theorem}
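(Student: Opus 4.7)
My plan is to recognize this statement as a block-matrix analogue of the classical Gershgorin disc theorem. The first step is to view $\mathbf{H}$ as an $L\times L$ block matrix whose diagonal blocks are the layer-wise Hessians $\mathbf{H}_{ll}\in\mathbb{R}^{d_l\times d_l}$ and whose off-diagonal blocks $\mathbf{H}_{lm}\in\mathbb{R}^{d_l\times d_m}$ carry the cross-layer second derivatives of the loss. The overall Hessian is then naturally split as $\mathbf{H}=\mathbf{H}_{\mathrm{BD}}+\mathbf{E}$, where $\mathbf{H}_{\mathrm{BD}}=\mathrm{diag}(\mathbf{H}_{11},\ldots,\mathbf{H}_{LL})$ and $\mathbf{E}$ collects the off-block-diagonal entries. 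The spectrum of $\mathbf{H}_{\mathrm{BD}}$ is exactly $\bigcup_{l=1}^{L}\sigma(\mathbf{H}_{ll})$, so any eigenvalue of $\mathbf{H}$ should live in an interval around one of these $\sigma(\mathbf{H}_{ll})$ with radius controlled by $\mathbf{E}$.

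Next, I would prove the block Gershgorin bound directly rather than invoke a global Weyl/Bauer--Fike estimate, because the latter gives only a dimension-$d$ bound whereas the statement asks for a layer-local radius. Let $\lambda$ be an eigenvalue of $\mathbf{H}$ with eigenvector $v=(v_1,\ldots,v_L)^\top$ partitioned by layer, and pick $l$ maximizing $\|v_l\|$. The $l$-th block of $\mathbf{H}v=\lambda v$ reads
\begin{equation*}
(\lambda\mathbf{I}-\mathbf{H}_{ll})v_l=\sum_{m\neq l}\mathbf{H}_{lm}v_m.
\end{equation*}
Taking norms on both sides, using $\|v_m\|\leq\|v_l\|$, and using that $\mathbf{H}_{ll}$ is symmetric so $\|(\lambda\mathbf{I}-\mathbf{H}_{ll})v_l\|\geq\bigl(\min_{\mu\in\sigma(\mathbf{H}_{ll})}|\lambda-\mu|\bigr)\|v_l\|$, I obtain
\begin{equation*}
\min_{\mu\in\sigma(\mathbf{H}_{ll})}|\lambda-\mu|\;\leq\; R_l\;:=\;\sum_{m\neq l}\|\mathbf{H}_{lm}\|_2.
\end{equation*}
Since $\mathbf{H}_{ll}$ is symmetric, this inclusion places $\lambda$ inside $[\lambda_{\min}(\mathbf{H}_{ll})-R_l,\lambda_{\max}(\mathbf{H}_{ll})+R_l]$, which is the interval form stated in the theorem.

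Finally, I need to convert the radius $R_l$ into the advertised $\mathcal{O}(\max(d_l,d-d_l))$ rate using the hypothesis that the Hessian entries are bounded above, say by some constant $M$. For any off-diagonal block, $\|\mathbf{H}_{lm}\|_2\leq\|\mathbf{H}_{lm}\|_F\leq M\sqrt{d_l d_m}$. Summing over $m\neq l$ and applying Cauchy--Schwarz,
\begin{equation*}
R_l\;\leq\;M\sqrt{d_l}\sum_{m\neq l}\sqrt{d_m}\;\leq\;M\sqrt{(L-1)\,d_l(d-d_l)}\;=\;\mathcal{O}\!\left(\sqrt{d_l(d-d_l)}\right)\;=\;\mathcal{O}\!\left(\max(d_l,d-d_l)\right),
\end{equation*}
using $\sqrt{d_l(d-d_l)}\leq\max(d_l,d-d_l)$. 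Taking the union over $l$ yields the desired containment. I expect the main subtlety to be step two: one must resist the temptation to apply Weyl's inequality to $\mathbf{E}$ globally (which would only give $\mathcal{O}(d)$); the block-eigenvector argument that selects the dominant block is what produces the layer-indexed radius $R_l$, and matching the per-layer $\max(d_l,d-d_l)$ scaling then requires the slightly careful Cauchy--Schwarz step above rather than a crude uniform bound.
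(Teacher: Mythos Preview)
Your proposal is correct and follows essentially the same argument as the paper: a block-Gershgorin step (select the eigenvector block of largest norm, take norms in the $l$-th block equation, and use symmetry of $\mathbf{H}_{ll}$ to lower-bound the left side by the spectral distance), followed by the entrywise bound $\|\mathbf{H}_{lm}\|_2\le\|\mathbf{H}_{lm}\|_F\le M\sqrt{d_l d_m}$ and a Cauchy--Schwarz estimate to reach $R_l\le\sqrt{(L-1)\,d_l(d-d_l)}=\mathcal{O}(\max(d_l,d-d_l))$. The only cosmetic difference is where Cauchy--Schwarz is applied (you bound $\sum_{m\ne l}\sqrt{d_m}$ directly, the paper bounds $\sum_{m\ne l}\|\mathbf{H}_{lm}\|$ and then inserts the Frobenius estimate), but the resulting inequality and the final union over $l$ are identical.
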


The detailed proof is given in the Sec.\textcolor{red}{2} of supplementary. The theorem says that all the eigenvalues of the overall Hessian $\lambda(\mathbf{H})$ can be upper bounded by the layerwise Hessian's top eigenvalue along with a constant dependent on dimensions. In light of the empirical observations in~\cite{Sankar2021ADL} and above theoretical findings, we opt to minimize the layer-wise Hessian's top eigenvalue instead of minimizing the overall Hessian's top eigenvalue.

\begin{figure*}[t]
    \centering
    \subfloat[FedAvg]{
    \includegraphics[scale=0.43]{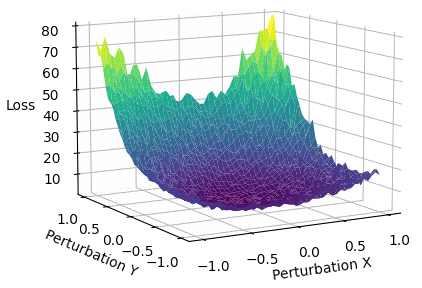}
     \label{fedavg_hess}
    }
    \subfloat[MAN]{
    \includegraphics[scale=0.45]{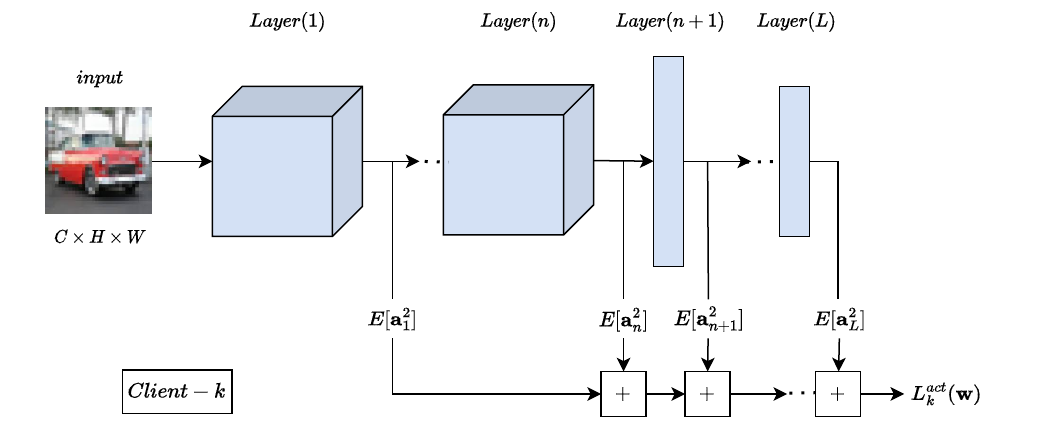}
     \label{man_comp}
    }
    \subfloat[FedAvg+MAN]{
    \hspace{-0.8cm}
    \includegraphics[scale=0.43]{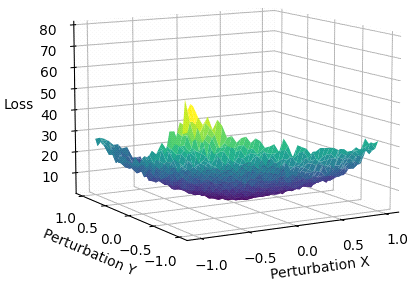}
    \label{fedavgreg_hess}
  }
  
  \caption{
   We plot the loss surface of the global model trained on CIFAR-100 using FedAvg in~\ref{fedavg_hess}. In Fig~\ref{man_comp} we show MAN regularizer. We combine FedAvg with MAN (FedAvg+MAN) to obtain the flat loss surface in Fig~\ref{fedavgreg_hess} which has better generalization.
  }
  \label{main_fig}
\end{figure*}

\subsubsection{Bounding the Layer-Wise Hessian Eigenvalues with the Activation Norms}
\label{theoritical_analysis}
Minimizing the layer-wise Hessian top eigenvalue is still a challenging task. To address this issue, we further simplify our method based on our results, which indicate that the top eigenvalue of the layer-wise Hessian is upper-bounded by the activation norm of the inputs to that layer. Therefore, by minimizing the activation norm, we can minimize the top eigenvalue of the layer-wise Hessian.

In our analysis, we consider the $C$ class classification problem. The training set $S = \{(\mathbf{x}^i,\mathbf{y}^i)\}_{i=1}^{N}$ is considered, where each $\mathbf{x}^i \in \mathbb{R}^D$ or $\mathbf{x}^i \in \mathbb{R}^{C\times H \times W}$and $\mathbf{y}^i \in \{0,1\}^C$ is drawn iid from the distribution $\mathcal{D}$. We then consider an $L$-layer neural network, where the network outputs the logits $\mathbf{z}^i$. The logits are obtained by a series of fully connected (FC) / convolutional (CONV) layers, followed by a non-linearity, represented concisely by Eq.~\ref{forward_affine} for a FC layer with parameters ($\{\mathbf{W}_l,\mathbf{b}_{l}\}$) and Eq.~\ref{forward_conv} for a CONV layer with parameters ($\{\mathbf{W}_l,\mathbf{b}_{l}\}$).
\begin{equation}
\mathbf{z}_l^i = \text{FC}(\mathbf{a}_{l-1}^i;\{\mathbf{W}_l,\mathbf{b}_{l}\})
\label{forward_affine}
\end{equation}
\begin{equation}
\mathbf{z}_l^i = \text{CONV}(\mathbf{a}_{l-1}^i;\{\mathbf{W}_l,\mathbf{b}_{l}\})
\label{forward_conv}
\end{equation}
\begin{equation}
\mathbf{a}_l^i = \sigma(\mathbf{z}_l^i) 
\end{equation}
where $\sigma(.)$ denotes non-linearity $\mathbf{a}_{0} = \mathbf{x}^i$, the logits $\mathbf{z}^i = \mathbf{a}_L^i$. 
We denote the output of softmax on logits $\mathbf{z}^i$ as 
\begin{equation}
\hat{\mathbf{y}}=  exp(\mathbf{z}^i)/\sum_{m=1}^{C}{exp(\mathbf{z}^i[m])})
\label{softmax_eqn}
\end{equation}
where $\hat{\mathbf{y}} \in [0,1]^C$
Finally, we use the cross-entropy loss 
\begin{equation}
\mathcal{L}(\mathbf{y}^i,\hat{\mathbf{y}}^i) = \sum_{c=1}^{C}{-\mathbf{y}^i[c] log(\hat{\mathbf{y}}^i[c])}.
\label{softmax_eq}
\end{equation}
where $\mathbf{x}^i$, $\mathbf{y}^i$ denotes the $i^{th}$ input sample and label respectively. 
We use the notation $\mathcal{L}^i$ for $\mathcal{L}(\mathbf{y}^i,\hat{\mathbf{y}}^i)$. The overall loss computed on Batch size of B is denoted by 
$\mathcal{L}={\frac{1}{B}}\sum_{i=1}^{B}{\mathcal{L}^i}$.  
Finally, we denote ${\theta} \in \mathbb{R}^d$ as the collection of all the model parameters into a vector of dimension $d$ that denotes the total number of parameters in the network. We explicitly denote Hessian of layer $l$ as {$\mathbf{H}_{\textbf{W}_{l}}({\mathcal{L}}$)}
We now state the two of our results that relate the layer-wise top eigenvalues to the activation norm of each layer. Consider a FC layer as in Eq.~\ref{forward_affine} with $\mathbf{a}_{l-1}^i \in \mathbb{R}^{d_{l-1}}$ and weights $\mathbf{W}_l \in \mathbb{R}^{d_{l-1}\times d_l}$. We then have the following result.    
\begin{theorem}
If ${\lVert \theta \rVert}_2 \leq \tilde{B}$, then the top eigenvalue of layer-wise Hessian for the loss $\mathcal{L}$ w.r.t to $ \textbf{W}_l$ denoted by $\lambda_{max}(\mathbf{H}_{\textbf{W}_{l}}{(\mathcal{L})}) $ for l = $2$ to $L$,  computed over the batch of samples for a $L$ layered fully connected neural network for multi-class classification is given by $\lambda_{max}({\mathbf{H}_{\textbf{W}_{l}}({\mathcal{L}}))   \leq  \alpha_l \sum_{i \in B} {\lVert \mathbf{a}_{l-1}^{i} \rVert}_2^2}$ where $\alpha_l>0$.
\label{hess_theorem1}
\end{theorem}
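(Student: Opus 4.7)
The plan is to unwrap the layer-wise Hessian through the chain rule, isolate the factor that carries the activation norm, and bound the remaining piece by a sample-independent constant using the assumption $\|\theta\|_2 \le \tilde B$. Because $\mathbf{z}_l^i = \mathbf{W}_l \mathbf{a}_{l-1}^i + \mathbf{b}_l$ is affine in $\mathbf{W}_l$, I can drop the second-order ``curvature of the parameterization'' term in the chain rule for Hessians, which gives the clean identity
\[
\mathbf{H}_{\mathbf{W}_l}(\mathcal{L}^i) \;=\; (J_l^i)^{\!\top}\, \mathbf{H}_{\mathbf{z}_l^i}(\mathcal{L}^i)\, J_l^i,
\]
where $J_l^i = \partial \mathbf{z}_l^i/\partial\mathrm{vec}(\mathbf{W}_l) = (\mathbf{a}_{l-1}^i)^{\!\top}\!\otimes I_{d_l}$. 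This is the step that exposes $\mathbf{a}_{l-1}^i$.

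Next I would pass to the spectral norm. Since $\|A\otimes B\|_2 = \|A\|_2\|B\|_2$, we have $\|J_l^i\|_2 = \|\mathbf{a}_{l-1}^i\|_2$, and by submultiplicativity (together with symmetry of the sandwiched matrix)
\[
\lambda_{\max}\bigl(\mathbf{H}_{\mathbf{W}_l}(\mathcal{L}^i)\bigr) \;\le\; \|\mathbf{a}_{l-1}^i\|_2^{\,2}\,\bigl\|\mathbf{H}_{\mathbf{z}_l^i}(\mathcal{L}^i)\bigr\|_2.
\]
Then I use the same subadditivity of $\lambda_{\max}$ over symmetric summands that is invoked at inequality~\ref{eig_ineq_hess} to push $\lambda_{\max}$ past the batch average $\mathcal{L} = \tfrac{1}{B}\sum_i \mathcal{L}^i$. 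This immediately yields a sum $\sum_{i\in B}\|\mathbf{a}_{l-1}^i\|_2^{\,2}$ multiplied by a factor of the form $\tfrac{1}{B}\max_i \|\mathbf{H}_{\mathbf{z}_l^i}(\mathcal{L}^i)\|_2$, which is the $\alpha_l$ we want, provided the remaining spectral norm can be bounded independently of $i$.

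Bounding $\|\mathbf{H}_{\mathbf{z}_l^i}(\mathcal{L}^i)\|_2$ by a constant $c_l = c_l(\tilde B,\text{arch.},\sigma)$ is the main obstacle and the place where the assumption $\|\theta\|_2 \le \tilde B$ enters. For $l = L$ this is straightforward: the Hessian of softmax–cross-entropy at the logits equals $\mathrm{diag}(\hat{\mathbf y}^i) - \hat{\mathbf y}^i(\hat{\mathbf y}^i)^{\!\top}$, a PSD matrix with spectral norm at most $1$. For $l < L$ I would induct backwards through the network: writing $\mathbf{z}_{l+1}^i = \mathbf{W}_{l+1}\sigma(\mathbf{z}_l^i)+\mathbf{b}_{l+1}$ and applying the chain rule once more, one obtains
\[
\mathbf{H}_{\mathbf{z}_l^i}(\mathcal{L}^i) = D_l\mathbf{W}_{l+1}^{\!\top}\mathbf{H}_{\mathbf{z}_{l+1}^i}(\mathcal{L}^i)\mathbf{W}_{l+1}D_l + R_l,
\]
where $D_l = \mathrm{diag}(\sigma'(\mathbf{z}_l^i))$ and $R_l$ collects the residual terms involving $\sigma''$ and the backpropagated gradient. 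Under standard regularity assumptions on $\sigma$ (bounded $\sigma'$ and $\sigma''$) and $\|\mathbf{W}_{l+1}\|_2 \le \tilde B$, both the sandwich and the residual are controlled by a constant, closing the induction.

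Finally I would set $\alpha_l := c_l/B$ and conclude
\[
\lambda_{\max}\bigl(\mathbf{H}_{\mathbf{W}_l}(\mathcal{L})\bigr) \;\le\; \alpha_l \sum_{i\in B}\|\mathbf{a}_{l-1}^i\|_2^{\,2}.
\]
The only truly delicate step is the backward induction on $\|\mathbf{H}_{\mathbf{z}_l^i}(\mathcal{L}^i)\|_2$; the activation-norm factor itself falls out of the linearity of the affine layer and the Kronecker identity, which are essentially bookkeeping.
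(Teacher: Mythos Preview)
Your decomposition is essentially the paper's: both extract the Kronecker structure $\mathbf{H}_{\mathbf{W}_l}(\mathcal{L}^i)=\mathbf{M}_l(x^i,\theta)\otimes \mathbf{a}_{l-1}^i(\mathbf{a}_{l-1}^i)^{\top}$ from the affinity of $\mathbf{z}_l^i$ in $\mathbf{W}_l$ (the paper cites this as a lemma from \cite{wu2020dissecting}), and both then push $\lambda_{\max}$ through the batch average via subadditivity. The only substantive divergence is how the factor $\mathbf{M}_l=\mathbf{H}_{\mathbf{z}_l^i}(\mathcal{L}^i)$ is controlled. The paper does \emph{not} run your backward induction through the layers; instead it invokes a one-line compactness argument: the entries of $\mathbf{M}_l(x^i,\theta)$ are continuous in $\theta$, eigenvalues are continuous in the entries, and $\{\|\theta\|_2\le\tilde B\}$ is compact, so $\sup_\theta \lambda_{\max}(\mathbf{M}_l(x^i,\theta))=:\alpha_l^i<\infty$; then $\alpha_l:=\max_{i\in B}\alpha_l^i$. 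Your recursive bound is more constructive and yields an $\alpha_l$ that is explicit in $\tilde B$, the depth, and $\|\sigma'\|_\infty,\|\sigma''\|_\infty$, at the price of those extra smoothness assumptions on $\sigma$; the paper's compactness route avoids any layer-by-layer bookkeeping but produces a non-explicit constant. A minor second difference: the paper uses the Kronecker eigenvalue identity $\lambda_{\max}(\mathbf{M}_l\otimes \mathbf{a}\mathbf{a}^{\top})=\lambda_{\max}(\mathbf{M}_l)\,\|\mathbf{a}\|_2^2$ directly (an equality, after noting $\lambda_{\max}(\mathbf{M}_l)>0$), whereas you pass through spectral-norm submultiplicativity, which gives the slightly looser $\|\mathbf{M}_l\|_2\,\|\mathbf{a}\|_2^2$ but is just as sufficient for the stated inequality.
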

Where ${\lVert . \rVert}_{2}$ denotes the Euclidean norm. We now present a similar result for the CONV layer with input feature map of dimension $\mathbf{a}_{l-1}^i \in \mathbb{R}^{C_{l-1} \times H_{l-1} \times W_{l-1}}$, the output feature map $\mathbf{z}_l^i \in \mathbb{R}^{m \times H_l \times W_l}$ and convolutional kernel $\mathbf{W}_l \in \mathbb{R}^{m \times C_{l-1}\times K_1 \times K_2}$, then the below Theorem holds.
\begin{theorem} 
If ${\lVert \theta \rVert}_2 \leq \tilde{B}$, then the top eigenvalue of layer-wise Hessians for the loss (w.r.t to $ \textbf{W}_l$ for l = $2$ to $L$ ) computed over the Batch of samples for a $L$ layered convolutional neural network for multi-class classification is given by $\lambda_{max}({\mathbf{H}_{\textbf{W}_{l}}{(\mathcal{L})}) \leq  \alpha_l\sum_{i \in B}  {\lVert \mathbf{a}_{l-1}^{i} \rVert}_F^2}$ where $\alpha_l^i>0$.
\label{hess_theorem2}
\end{theorem}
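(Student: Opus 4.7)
My plan is to reduce Theorem~\ref{hess_theorem2} to the FC case treated in Theorem~\ref{hess_theorem1} by exploiting the fact that a convolution is linear in the weights. Concretely, using the im2col/unfolding operator I would rewrite the CONV layer as
\[
\mathbf{z}_l^i \;=\; \mathbf{W}_l^{\mathrm{mat}}\,\widetilde{\mathbf{a}}_{l-1}^i,
\]
where $\mathbf{W}_l^{\mathrm{mat}}\in\mathbb{R}^{m\times(C_{l-1}K_1K_2)}$ is the matrix reshape of the kernel and $\widetilde{\mathbf{a}}_{l-1}^i\in\mathbb{R}^{(C_{l-1}K_1K_2)\times H_lW_l}$ is the patch matrix whose columns are the $K_1\times K_2$ receptive fields of $\mathbf{a}_{l-1}^i$. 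A key elementary fact I would record separately is the patch-norm identity
\[
\|\widetilde{\mathbf{a}}_{l-1}^i\|_F^2 \;\le\; K_1 K_2 \,\|\mathbf{a}_{l-1}^i\|_F^2,
\]
since each entry of the input feature map appears in at most $K_1K_2$ patches (assuming unit stride; otherwise fewer).

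Next I would compute the layer-wise Hessian. Because $\mathbf{z}_l^i$ is linear in $\mathbf{W}_l^{\mathrm{mat}}$, the second derivative of $\mathbf{z}_l^i$ with respect to the kernel entries vanishes, so the chain rule collapses to the Gauss--Newton-like form
\[
\mathbf{H}_{\mathbf{W}_l}(\mathcal{L}^i) \;=\; J_i^{\top}\,\mathbf{H}_{\mathbf{z}_l^i}(\mathcal{L}^i)\,J_i,
\]
where $J_i = \partial \mathbf{z}_l^i/\partial \mathbf{W}_l^{\mathrm{mat}}$ is a block-diagonal linear map built from the columns of $\widetilde{\mathbf{a}}_{l-1}^i$. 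Submultiplicativity of the spectral norm then gives
\[
\lambda_{\max}\!\bigl(\mathbf{H}_{\mathbf{W}_l}(\mathcal{L}^i)\bigr)
\;\le\; \|J_i\|_2^2 \cdot \lambda_{\max}\!\bigl(\mathbf{H}_{\mathbf{z}_l^i}(\mathcal{L}^i)\bigr)
\;\le\; \|\widetilde{\mathbf{a}}_{l-1}^i\|_F^2 \cdot \lambda_{\max}\!\bigl(\mathbf{H}_{\mathbf{z}_l^i}(\mathcal{L}^i)\bigr).
\]
The pre-activation Hessian $\mathbf{H}_{\mathbf{z}_l^i}(\mathcal{L}^i)$ can be controlled, exactly as in the FC proof of Theorem~\ref{hess_theorem1}, using the assumption $\|\theta\|_2\le \tilde{B}$: boundedness of the parameters together with the softmax/cross-entropy structure (where the softmax Jacobian satisfies $\|\mathrm{diag}(\hat{\mathbf{y}})-\hat{\mathbf{y}}\hat{\mathbf{y}}^{\top}\|_2\le 1$) yields a constant $c_l(\tilde{B})$ such that $\lambda_{\max}(\mathbf{H}_{\mathbf{z}_l^i}(\mathcal{L}^i))\le c_l(\tilde{B})$.

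Combining the two bounds and invoking the patch-norm identity gives
\[
\lambda_{\max}\!\bigl(\mathbf{H}_{\mathbf{W}_l}(\mathcal{L}^i)\bigr)
\;\le\; c_l(\tilde{B})\, K_1 K_2 \,\|\mathbf{a}_{l-1}^i\|_F^2.
\]
To pass from a single sample to the batch, I would apply the subadditivity of $\lambda_{\max}$ on symmetric matrices (the identity already used in~(\ref{eig_ineq_hess})) to $\mathcal{L}=\tfrac1B\sum_i\mathcal{L}^i$, absorbing $c_l(\tilde{B})K_1K_2/B$ into the constant $\alpha_l$ to obtain the desired inequality $\lambda_{\max}(\mathbf{H}_{\mathbf{W}_l}(\mathcal{L}))\le \alpha_l\sum_{i\in B}\|\mathbf{a}_{l-1}^i\|_F^2$.

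The step I expect to be the main obstacle is bookkeeping rather than any deep inequality: writing the Jacobian $J_i$ explicitly so that its spectral norm is genuinely bounded by $\|\widetilde{\mathbf{a}}_{l-1}^i\|_F$ (as opposed to a larger surrogate) requires being careful about how the shared kernel entries index into the unfolded representation, and the patch-overlap factor $K_1K_2$ must be tracked so that the final constant really depends only on architecture and $\tilde{B}$. Once that reshape is pinned down, the rest mirrors the FC argument line for line.
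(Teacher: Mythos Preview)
Your proposal is correct, and it takes a genuinely different route from the paper's own argument.

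The paper does not go through the im2col unfolding. Instead, it invokes a structural result from Wu et al.\ (stated as Lemma~\ref{conv_lemma}) that the layer-wise Hessian for a CONV layer admits a Kronecker factorization
\[
H_{\mathbf{W}_l}(\mathcal{L}^i)\;\approx\;\tilde{\mathbf{M}}_l(x^i)\otimes \mathbf{a}_{l-1}^i{\mathbf{a}_{l-1}^i}^{\top},
\]
and then combines the eigenvalue identity for Kronecker products with the trace bound $\lambda_{\max}(\mathbf{a}\mathbf{a}^{\top})\le\operatorname{Tr}(\mathbf{a}\mathbf{a}^{\top})=\|\mathbf{a}\|_F^2$ and a compactness argument (continuity of eigenvalues on the compact set $\{\theta:\|\theta\|_2\le\tilde{B}\}$) to bound $\lambda_{\max}(\tilde{\mathbf{M}}_l(x^i,\theta))$. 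That yields the claimed $\alpha_l\|\mathbf{a}_{l-1}^i\|_F^2$ per sample, after which the batch bound follows by subadditivity of $\lambda_{\max}$ exactly as you do.

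Your route is more elementary and, in a sense, more honest: the Kronecker factorization for convolutions in Lemma~\ref{conv_lemma} is only an approximation (the paper writes it with ``$\approx$'' and then silently treats it as equality in the proof), whereas your chain-rule identity $H_{\mathbf{W}_l}(\mathcal{L}^i)=J_i^{\top}H_{\mathbf{z}_l^i}(\mathcal{L}^i)J_i$ is exact because convolution is linear in the kernel. The price you pay is the explicit $K_1K_2$ overlap factor from the patch-norm bound, which the paper buries inside its $\tilde{\mathbf{M}}_l$ factor. One small point of care: your constant $c_l(\tilde{B})$ bounding $\lambda_{\max}(H_{\mathbf{z}_l^i}(\mathcal{L}^i))$ will in general depend on the sample $x^i$ as well, not just on $\tilde{B}$, since the intermediate activations enter the downstream Jacobians; the paper handles this the same way you would, by first getting a per-sample $\alpha_l^i$ via compactness and then taking $\alpha_l=\max_{i\in B}\alpha_l^i$.
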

where ${\lVert . \rVert}_{F}$ is the Frobenious norm\footnote{${\lVert \mathbf{a}_{l-1}^i \rVert}_{F}^2 \coloneq \sum_{j_1 = 0} ^ {C_{l-1}-1} \sum_{j_2 = 0} ^ {H_{l-1}-1} \sum_{j_3 = 0} ^{W_{l-1}-1}({\mathbf{a}_{l-1}^i}[j_1,j_2,j_3])^2$}. The result of Theorem~\ref{hess_theorem1} and  Theorem~\ref{hess_theorem2} states that  for a fully connected layer or a convolutional layer the top eigenvalue of Hessian of layer $l$ can be minimized by minimizing the activation norm of its input $ \mathbf{a}_{l-1}^{i}$. Detailed proofs of Theorems \ref{hess_theorem1} \& \ref{hess_theorem2} are provided in the Sec.\textcolor{red}{2} of the supplementary.
\vspace{-0.05in}
\subsubsection{Minimizing the Activation Norm}
\label{man_regularizer}

Building on the insights from the aforementioned findings, we address our objective by minimizing the layer-wise activation norms. This, in turn, minimizes the top eigenvalue of the layer-wise Hessian, thereby achieving a reduction in the top eigenvalue of the overall Hessian (i.e. increase flatness). We now present our proposed method, which involves each client $k$ minimizing $f_{k}(\mathbf{w})$ as defined below (Eq.~\ref{eq1}).
\begin{equation}
f_{k}(\mathbf{w}) \triangleq L_{k}(\mathbf{w}) + \zeta L_{k}^{act}(\mathbf{w})
\label{eq1}
\end{equation}
$\zeta$ is the balancing parameter between the task-specific loss and the regularization loss. We use $L_{k}^{act}$ as the substitute for $\lambda_{max}(\textbf{H}(L_k(\mathbf{w})))$ in the Eq.~\ref{clnt_obj_hess}.
Our proposed regularizer $L_{k}^{act}(\mathbf{w})$ computes the second moment of activation's for a layer ($l$) (denoted by $\mathbf{a}_l$) after non-linearity and then further sum across the $L$ layers. The computation of the activation norm and its impact is shown in Figure~\ref{main_fig}. Mathematically we describe our regularization term as below 
\begin{equation}
 L_{k}^{act}(\mathbf{w}) \triangleq  \sum_{l=1}^{L} \mathop{\mathbb{E}}[{\lVert \mathbf{a}_l \rVert}^2]
\label{man_eq}
\end{equation}
 $\mathop{\mathbb{E}}$ denotes the expectation operation approximated by averaging over mini-batch. $l$ denotes the layer index which can be a convolutional layer or a fully connected layer. If $l$ is the CONV layer, $\mathbf{a}_l^i \in \mathbb{R}^{C_l \times H_l\times W_l}$ is activation map of $i^{th}$ sample.  we define $\mathop{\mathbb{E}}[{\lVert \mathbf{a}_l \rVert}^2]$ in Eq.~\ref{conv_mean_eq}.
\begin{equation}
\mathop{\mathbb{E}}[{\lVert \mathbf{a}_l \rVert}^2] \coloneqq  {1 \over BC_lH_lW_l} \sum_{i = 0} ^ {B-1} \sum_{j_1 = 0} ^ {C_l-1} \sum_{j_2 = 0} ^ {H_l-1} \sum_{j_3 = 0} ^{W_l-1}({\mathbf{a}_l^i}[j_1,j_2,j_3])^2
\label{conv_mean_eq}
\end{equation}
In the above equation $B$ is the batch size, $C_l$ is the feature map, $H_l$ and $W_l$ denotes the height and width of the activations at layer $l$. This can be compactly written as 
$\mathop{\mathbb{E}} [{\lVert \mathbf{a}_l \rVert}^2] =  {1 \over BH_lW_l C_l} \sum_{i = 0} ^ {B-1} {\lVert \mathbf{a}_l^i \rVert}_{F}^2$.
If $l$ is the FC layer, $\mathbf{a}_l^i \in \mathbb{R}^{d_l}$, $\mathop{\mathbb{E}} [{\lVert \mathbf{a}_l \rVert}^2] $ is defined below
\begin{equation}
\mathop{\mathbb{E}} [{\lVert \mathbf{a}_l \rVert}^2] \coloneqq  {1 \over Bd_l} \sum_{i = 0} ^ {B-1} \sum_{j_1 = 0} ^ {d_l-1} (\mathbf{a}_l^i[j_1])^2
\label{fc_mean_eq}
\end{equation}
$B$ is the batch size and and $d_l$ is feature dimension of the FC layer. Eq.~\ref{fc_mean_eq} can also be written as 
$\mathop{\mathbb{E}} [{\lVert \mathbf{a}_l \rVert}^2]  =  {1 \over B} \sum_{i = 0} ^ {B-1} 
{1 \over d_l} {\lVert \mathbf{a}_l^i\rVert}_{2}^2$, where ${\lVert . \rVert}_2$ is the euclidean norm.   

\subsubsection{Convergence Analysis}
We now analyze the convergence of MAN regularizer when its used by the clients as in Eq.~\ref{eq1} and FedAvg is used for aggregation i.e, FedAvg+MAN. 
 Supppose the loss functions $f_k(\mathbf{w})$ in Eq.~\ref{eq1} satisfies the below assumptions as mentioned in~\cite{karimireddy2020scaffold}. 
\begin{assumption}
The loss functions $\text{f}_k$ are Lipschiltz smooth, i.e., $ {\lVert \nabla{f_k(\mathbf{x})} - \nabla{f_k(\mathbf{y})} \rVert} \leq  {\beta} {\lVert\mathbf{x} - \mathbf{y} \rVert}$.
\label{a1_asmp}
\end{assumption}

\begin{assumption}
  $ \frac{1}{n} \sum_{k\in [n]} {\lVert\nabla{f_{k}(\mathbf{w}}) \rVert}^2  \leq G^2 + B^2{\lVert \nabla{f(\mathbf{w})} \rVert}^2$,where $f(\mathbf{w}) = \frac{1}{n} \sum_{k\in [n]} {f_{k}(\mathbf{w}}) $.This is referred to bounded gradient dissimilarity assumption,
  \label{a2_asmp}
\end{assumption}

\begin{assumption}
let $ \mathbb{E}{\lVert f_k(\mathbf{w},(x,y)) - f_k(\mathbf{w})\rVert} \leq \sigma^2$, for all $k$  and $\mathbf{w}$. Here  $f_k(\mathbf{w},(x,y))$ is loss evaluated on the sample $(x,y)$ and $f_k(\mathbf{w})$ is expectation across the samples.
This is a bounded variance assumption. 
\label{a3_asmp}
\end{assumption}
We then have the following proposition.
\begin{prop}
Theorem \MakeUppercase{\romannumeral 5} of~\cite{karimireddy2020scaffold} in Appendix D.2:  let $\mathbf{w}^* = \underset{\mathbf{w}}{\arg\min} \ \tilde{l}(\mathbf{w}) $, the global step-size be $\alpha_g$ and the local step-size be $\alpha_l$. FedAvg+MAN algorithm will have contracting gradients. If Initial model is $\mathbf{w}^0$, $F = \tilde{l}(\mathbf{w}^0)-\tilde{l}(\mathbf{w}^*)$ and for constant $M$, then in $R$ rounds, the model $\mathbf{w}^R$ satisfies
$\mathbb{E}[{\lVert \nabla{{f}(\mathbf{w}^R)} \rVert}^2] \leq 
{O({{\beta M \sqrt{F}} \over {\sqrt{RLS}} } + 
{{\beta^{1/3}(FG)^{2/3} )} \over {(R+1)^{2/3}} } + 
{{\beta B^2 F} \over {R}})
}$.
\label{scaffold_prop}
\end{prop}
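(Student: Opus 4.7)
The plan is to observe that FedAvg+MAN is structurally identical to standard FedAvg applied to the modified per-client objective $f_k(\mathbf{w}) = L_k(\mathbf{w}) + \zeta L_k^{act}(\mathbf{w})$, and then to invoke Theorem V (Appendix D.2) of~\cite{karimireddy2020scaffold} essentially as a black box. Because Assumptions~\ref{a1_asmp}--\ref{a3_asmp} are postulated directly on $f_k$ rather than on the original cross-entropy $L_k$, the full FedAvg convergence argument from SCAFFOLD transfers without modification once the algorithmic correspondence is made explicit.

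The key steps, in order, would be the following. First, write out one communication round of FedAvg+MAN and verify that, viewed through the lens of the loss $f_k$, the client side is exactly local SGD with step size $\alpha_l$ and the server side is exactly the FedAvg aggregation with step size $\alpha_g$; the regularizer $\zeta L_k^{act}$ is differentiated together with the cross-entropy term during the local pass, so no change to the SCAFFOLD template is required. Second, identify the three ingredients the FedAvg bound in SCAFFOLD requires -- $\beta$-smoothness, bounded gradient dissimilarity with constants $(G^2, B^2)$, and bounded stochastic-gradient variance $\sigma^2$ -- and match these to Assumptions~\ref{a1_asmp}, \ref{a2_asmp}, \ref{a3_asmp} with identical constants. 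Third, set $F = \tilde{l}(\mathbf{w}^0) - \tilde{l}(\mathbf{w}^\ast)$ with $\tilde{l} = \tfrac{1}{n}\sum_k f_k$ and substitute into the SCAFFOLD rate to obtain the three-term bound stated in the proposition.

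The hard part is almost entirely bookkeeping rather than new analysis: one must confirm that the \emph{contracting gradients} property invoked by SCAFFOLD's Theorem V, which relates local client drift to global progress, holds for FedAvg run on $f_k$ under A1--A3. This is established inside the SCAFFOLD proof itself and does not need re-derivation here. The only genuinely new concern is whether the MAN regularizer, whose curvature depends on activations rather than directly on the parameters in a decoupled way, admits a finite smoothness constant $\beta$. Since Assumption~\ref{a1_asmp} is postulated on $f_k$ itself, this is absorbed into the hypothesis; if one wished to remove A1 as a primitive assumption, one would need uniform bounds on pre-activation values, their Jacobians, and second derivatives layer-by-layer (producing constants that scale with depth $L$ and with $\zeta$), but this would affect only the constants and not the asymptotic rate in $R$.
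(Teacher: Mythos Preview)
Your proposal is correct and matches the paper's own treatment: the paper does not give an independent proof but simply observes that FedAvg+MAN is FedAvg run on the modified per-client objective $f_k$, places Assumptions~\ref{a1_asmp}--\ref{a3_asmp} directly on $f_k$, and invokes Theorem~V of~\cite{karimireddy2020scaffold} as a black box. Your additional remark about what would be needed to derive $\beta$-smoothness of the MAN term from first principles goes slightly beyond the paper, but is consistent with its framing.
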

The above proposition states that the FedAvg+MAN algorithm requires $\mathcal{O}(\frac{1}{\epsilon^2})$ communication rounds to make the average gradients of the global model smaller, i.e., $\mathbb{E}[{\lVert \nabla{\tilde{l}(\mathbf{w}^R)} \rVert}^2] \leq \epsilon$. Similar guarantees can be given when we use other FL algorithms with the proposed MAN.

\begin{table*}[htp]
\centering
\caption{
Accuracy and communication round comparisons are presented as follows: Accuracy is given in the format of mean ± standard deviation, accompanied by the number of communication rounds. These rounds are required to achieve $50\%$ accuracy for CIFAR-100 and $28\%$ accuracy for Tiny-ImageNet. The experiments are repeated for three initializations, and the mean and standard deviation of accuracy are reported. The performance of different algorithms is shown on CIFAR-100 \& Tiny-ImageNet with and without the MAN regularizer. Accuracy values are reported after 500 communication rounds. On CIFAR-100 FedDC attains accuracy of $52.03\%$  while FedDC+MAN achieves $55.21\%$ accuracy. FedDC reaches $50\%$ accuracy in 294 rounds, whereas FedDC+MAN in just 144 rounds. The utilization of the proposed MAN regularizer clearly enhances the performance and reduces communication rounds for all algorithms.
}
\scalebox{0.76}{
\begin{tabular}{cccc|ccc|}
\toprule
\multirow{3}{*}{Algorithm}                   & \multicolumn{3}{|c}{CIFAR-100}                                                                                        
                                             & \multicolumn{3}{c}{Tiny-ImageNet}                                                    
                                             \\ \cline{2-7} 
                                             & \multicolumn{2}{|c|}{non-iid}                                              
                                             & \multicolumn{1}{c|}{\multirow{2}{*}{iid}} 
                                             & \multicolumn{2}{c|}{non-iid}                                              
                                             & \multicolumn{1}{c}{\multirow{2}{*}{iid}} \\ \cline{2-3} \cline{5-6}
                                             & \multicolumn{1}{|c}{$\delta= 0.3$}  
                                             & \multicolumn{1}{c|}{$\delta= 0.6$}  
                                             & \multicolumn{1}{l|}{}                     
                                             & \multicolumn{1}{c}{$\delta= 0.3$}  
                                             & \multicolumn{1}{c|}{$\delta= 0.6$}  
                                             & \multicolumn{1}{l}{}                     \\ \midrule
FedAvg                                       
& \multicolumn{1}{l}{40.90 $_{\pm{0.62}}$ (500+) }          
& \multicolumn{1}{l}{40.39 $_{\pm{0.59}}$ (500+)}          
& \multicolumn{1}{l|} {39.40 $_{\pm{0.84}}$ (500+)}         
& \multicolumn{1}{l}{25.35 $_{\pm{1.16}}$ (500+)}          
& \multicolumn{1}{l}{24.41 $_{\pm{0.41}}$ (500+)}          
&  \multicolumn{1}{l} {23.75 $_{\pm{0.99}}$ (500+)} \\

FedAvg+MAN (\textbf{Ours})                       
& \multicolumn{1}{l}{\textbf{52.00} $_{\pm{0.36}}$ (\textbf{206})} 
& \multicolumn{1}{l}{\textbf{52.42} $_{\pm{0.23}}$ (\textbf{210})} 
&  \multicolumn{1}{l|}{\textbf{52.59} $_{\pm{0.25}}$ (\textbf{224})} 
& \multicolumn{1}{l}{\textbf{28.09} $_{\pm{0.26}}$ (\textbf{437})} 
& \multicolumn{1}{l}{\textbf{28.90} $_{\pm{0.21}}$ (\textbf{194})} 
& \multicolumn{1}{l}{\textbf{29.11} $_{\pm{0.12}}$ (\textbf{182})}           \\ \midrule
FedSAM

& \multicolumn{1}{l}{43.44 $_{\pm{0.11}}$ (500+)}          
& \multicolumn{1}{l}{43.36 $_{\pm{0.24}}$ (500+)}          
& \multicolumn{1}{l|}{41.31 $_{\pm{0.27}}$ (500+)}           
& \multicolumn{1}{l}{26.23 $_{\pm{0.68}}$ (500+)}          
& \multicolumn{1}{l}{26.04 $_{\pm{0.20}}$ (500+)}          
& \multicolumn{1}{l} {23.97 $_{\pm{0.83}}$ (500+)}                                     \\ 
FedSAM+MAN (\textbf{Ours})                       
& \multicolumn{1}{l}{\textbf{51.59} $_{\pm{0.48}}$ (\textbf{326})}          
& \multicolumn{1}{l}{\textbf{52.62} $_{\pm{0.40}}$ (\textbf{281})}          
& \multicolumn{1}{l|}{\textbf{52.85} $_{\pm{0.13}}$ (\textbf{301})}     
& \multicolumn{1}{l}{\textbf{32.16} $_{\pm{0.20}}$ (\textbf{104})}          
& \multicolumn{1}{l}{\textbf{32.60} $_{\pm{0.92}}$ (\textbf{87})}          
& \multicolumn{1}{l}{\textbf{31.40} $_{\pm{0.30}}$ (\textbf{82})}   
\\ 
\midrule

FedASAM

& \multicolumn{1}{l}{46.00 $_{\pm{0.10}}$ (500+)}          
& \multicolumn{1}{l}{45.48 $_{\pm{0.08}}$ (500+)}          
& \multicolumn{1}{l|}{44.18 $_{\pm{0.61}}$ (500+)}           
& \multicolumn{1}{l}{27.50 $_{\pm{0.09}}$ (500+)}          
& \multicolumn{1}{l}{27.05 $_{\pm{0.16}}$ (500+)}          
& \multicolumn{1}{l} {23.96 $_{\pm{0.43}}$ (500+)}                                      \\ 
FedASAM+MAN (\textbf{Ours})                       
& \multicolumn{1}{l}{\textbf{51.23} $_{\pm{0.20}}$ (\textbf{313}) }          
& \multicolumn{1}{l}{\textbf{51.89} $_{\pm{0.09}}$ (\textbf{351})}          
& \multicolumn{1}{l|}{\textbf{52.41} $_{\pm{0.43}}$ (\textbf{296})}     
& \multicolumn{1}{l}{\textbf{32.50} $_{\pm{0.05}}$ (\textbf{140})}          
& \multicolumn{1}{l}{\textbf{32.41} $_{\pm{0.32}}$ (\textbf{136})}          
& \multicolumn{1}{l}{\textbf{31.70} $_{\pm{0.81}}$ (\textbf{93})}   
\\ 
\midrule
FedDyn

& \multicolumn{1}{l}{49.29 $_{\pm{0.30}}$ (500+)}          
& \multicolumn{1}{l}{49.91 $_{\pm{0.41}}$ (486)}          
& \multicolumn{1}{l|}{50.04 $_{\pm{0.22}}$ (500+)}            
& \multicolumn{1}{l}{29.23 $_{\pm{0.06}}$ (295)}          
& \multicolumn{1}{l}{28.99 $_{\pm{0.55}}$ (308)}          
& \multicolumn{1}{l} {29.41 $_{\pm{1.33}}$ (350)}                                     \\ 
FedDyn+MAN (\textbf{Ours})                        
& \multicolumn{1}{l}{\textbf{55.27} $_{\pm{0.12}}$ (\textbf{145})}          
& \multicolumn{1}{l}{\textbf{55.63} $_{\pm{0.37}}$ (\textbf{143})}          
& \multicolumn{1}{l|}{\textbf{55.83} $_{\pm{0.56}}$ (\textbf{157})}     
& \multicolumn{1}{l}{\textbf{32.00} $_{\pm{0.57}}$ (\textbf{132})}          
& \multicolumn{1}{l}{\textbf{32.44} $_{\pm{0.27}}$ (\textbf{110})}          
& \multicolumn{1}{l}{\textbf{32.31} $_{\pm{0.38}}$ (\textbf{108})}                       
\\ \midrule
FedDC                                    
& \multicolumn{1}{l}{52.02 $_{\pm{0.79}}$ (294)}          
& \multicolumn{1}{l}{52.64 $_{\pm{0.24}}$ (304)}          
& \multicolumn{1}{l|}{53.25 $_{\pm{0.86}}$ (289)}         
& \multicolumn{1}{l}{31.44 $_{\pm{0.43}}$ (170)}         
& \multicolumn{1}{l}{31.42 $_{\pm{0.36}}$ (193)}          
&  \multicolumn{1}{l}{31.21 $_{\pm{0.43}}$ (201)}                                     \\ 
FedDC + MAN (\textbf{Ours}) 
& \multicolumn{1}{l}{\textbf{55.21} $_{\pm{0.32}}$ (\textbf{144}) }    
& \multicolumn{1}{l}{\textbf{55.40} $_{\pm{0.30}}$ (\textbf{129})}   
& \multicolumn{1}{l|}{\textbf{56.77} $_{\pm{0.31}}$ (\textbf{156})}    
& \multicolumn{1}{l}{\textbf{35.70} $_{\pm{0.21}}$ (\textbf{73})}    
& \multicolumn{1}{l}{\textbf{36.07} $_{\pm{0.23}}$ (\textbf{66})}    
& \multicolumn{1}{l}{\textbf{36.53} $_{\pm{0.03}}$ (\textbf{61})} 

\\ \midrule
FedSpeed                                 
& \multicolumn{1}{l}{50.95 $_{\pm{0.02}}$ (392)}          
& \multicolumn{1}{l}{51.33 $_{\pm{0.17}}$ (390)}          
& \multicolumn{1}{l|}{50.95 $_{\pm{0.51}}$ (439)}         
& \multicolumn{1}{l}{31.12 $_{\pm{0.61}}$ (211)}         
& \multicolumn{1}{l}{31.10 $_{\pm{0.27}}$ (228)}          
&  \multicolumn{1}{l}{29.65 $_{\pm{0.11}}$ (363)}                                     \\ 
FedSpeed + MAN (\textbf{Ours}) 
& \multicolumn{1}{l}{\textbf{55.23} $_{\pm{0.15}}$  (\textbf{163})}    
& \multicolumn{1}{l}{\textbf{55.84} $_{\pm{0.11}}$ (\textbf{153})}   
& \multicolumn{1}{l|}{\textbf{55.89} $_{\pm{0.41}}$ (\textbf{163})}    
& \multicolumn{1}{l}{\textbf{34.32} $_{\pm{0.63}}$ (\textbf{108})}    
& \multicolumn{1}{l}{\textbf{35.49} $_{\pm{0.07}}$ (\textbf{78})}    
& \multicolumn{1}{l}{\textbf{33.02} $_{\pm{0.55}}$ (\textbf{98})} 
\\ \bottomrule
\end{tabular}
}

\label{table_acc}
\end{table*}
\section{Experiments}
We perform experiments under both the non-iid and iid setups. For non-iid, we experiment on the label imbalance, whose experimental setups are described below.
We adopted the experimental setup described in~\cite{Gao_2022_CVPR,acar2021federated}, using CIFAR-100~\cite{krizhevsky2009learning} and Tiny-ImageNet~\cite{le2015tiny} datasets. The model specifications are available in Sec. \textcolor{red}{3} of the suppl material. Both iid and non-iid data partitioning were tested in our experiments, with $100$ clients participating and $10\%$ of them being selected at random for each communication round. Each client was allocated the same number of samples, and the accuracy was measured at the end of $500$ communication rounds. Non-iid data was generated using the Dirichlet distribution $Dir(\delta)$, following the approach in~\cite{acar2021federated}. A label distribution vector was sampled for each client from the Dirichlet distribution; the entries of the vector are non-negative and summed to $1$. The value of $\delta$ controls the degree of non-iid data, with lower values resulting in higher label imbalances. The effect of $\delta$ on client distribution is presented in Sec.\textcolor{red}{6} of supplementary. We also conducted a sensitivity analysis of the hyper-parameter $\zeta$ on global model accuracy, as described in Sec. \textcolor{red}{4} of supplementary. The learning rate was set to $0.1$ with a decay of $0.998$ per communication round, batch size of $50$ and $5$ epochs per round. 
All these hyper-parameter settings are consistent with the~\cite{acar2021federated, Gao_2022_CVPR}. We set the hyper-parameter $\zeta$ to $0.15$ for CIFAR-100 and $0.1$ for Tiny-ImageNet, and more details are in Sec.\textcolor{red}{7} of supplementary. 
We have also performed experiments on the CIFAR-10 dataset. The results are given in Sec \textcolor{red}{5.2} of the suppl material. 
\begin{figure*}[htp]
  \centering
  \subfloat[$\delta = 0.3$]{
  \includegraphics[scale=0.39]{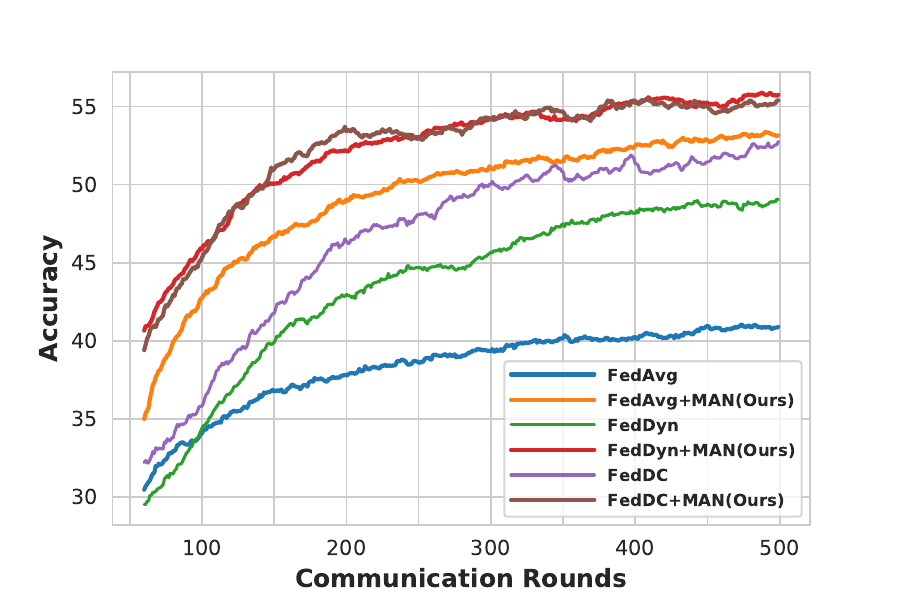}}\hspace{-2.em}
  \subfloat[$\delta = 0.6$]{
  \includegraphics[scale=0.39]{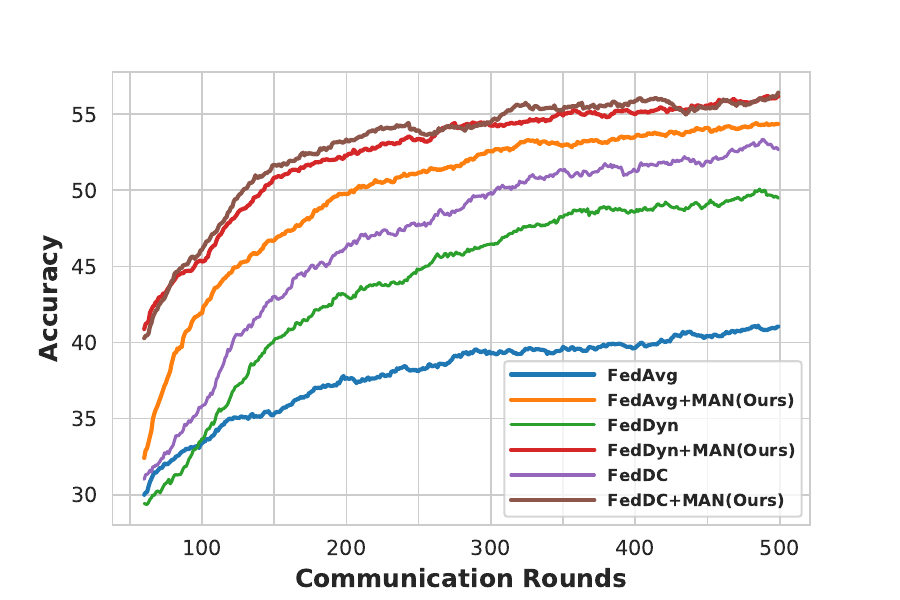}}\hspace{-2.em}
  \subfloat[iid]{
  \includegraphics[scale=0.39]{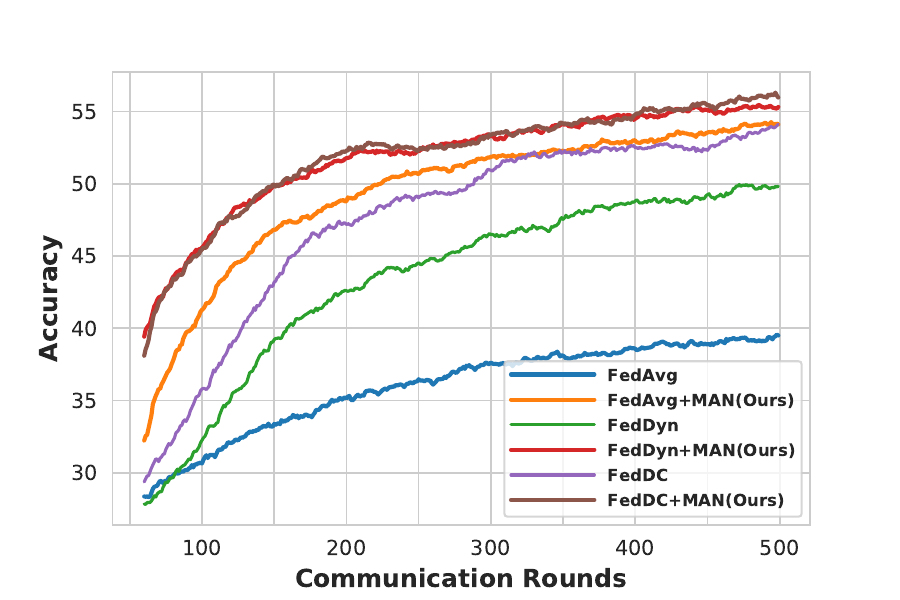}
  }
  \caption{Convergence Comparison on CIFAR-100: We compare performance of the algorithms FedAvg, FedDyn, FedDC and the proposed FedAvg+MAN, FedDyn+MAN and FedDC+MAN for 500 communication rounds. It can be clearly seen that  proposed approach significantly improves the existing algorithms across the communication rounds. }
  \label{cifar100_perf}
\end{figure*}
\begin{figure*}[htp]
  \centering
  \subfloat[$\delta = 0.3$]{
  \includegraphics[scale=0.39]{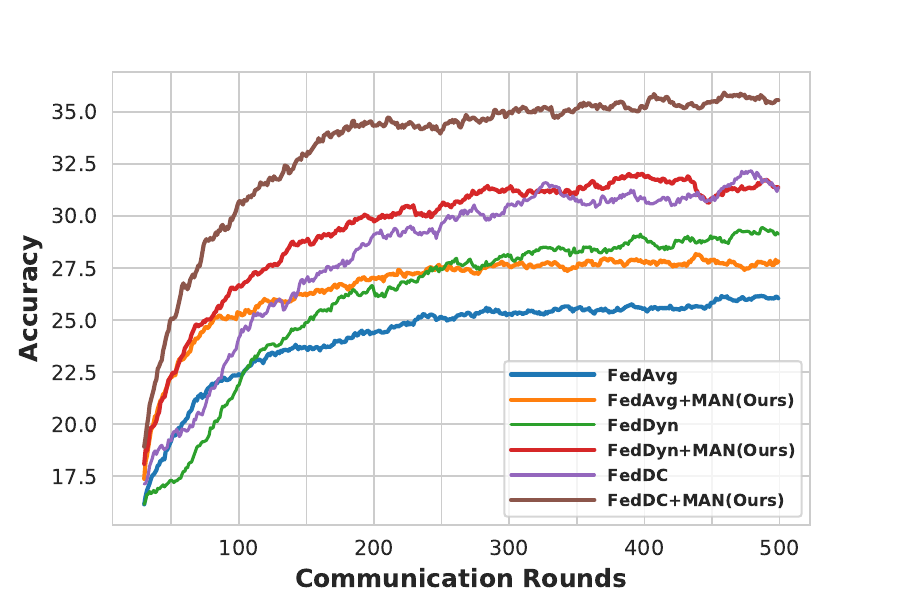}} \hspace{-2.em}
  \subfloat[$\delta = 0.6$]{
  \includegraphics[scale=0.39]{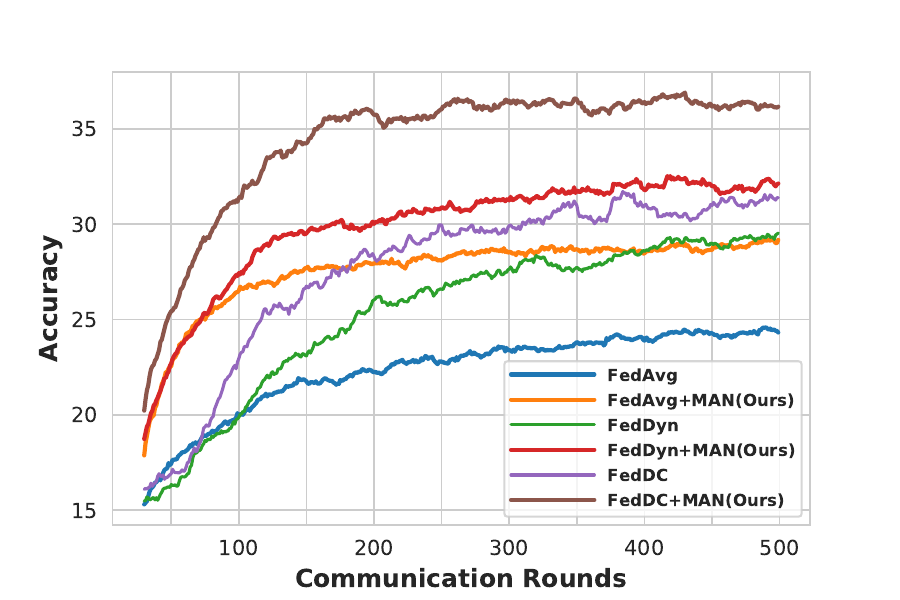}}
  \hspace{-2.em}
  \subfloat[iid]{
  \includegraphics[scale=0.39]{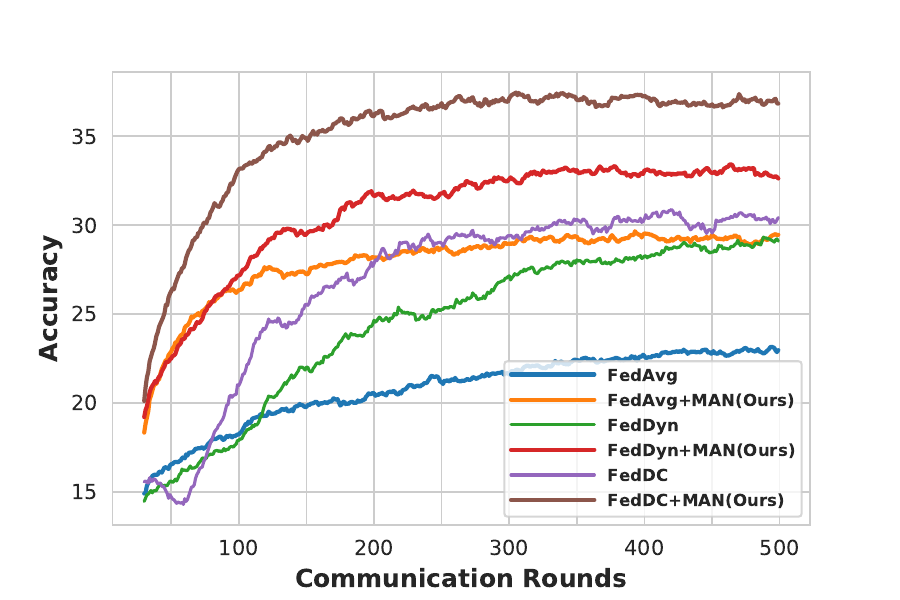}
  }
  \caption{Convergence Comparison on Tiny-ImageNet: We compare the performance of the algorithms FedAvg, FedDyn, FedDC and the proposed FedAvg+MAN, FedDyn+MAN, and FedDC+MAN for 500 communication rounds. It can be clearly seen that the proposed approach significantly improves the existing algorithms.}
  \label{TinyImageNet_perf}
\end{figure*}%
\vspace{-0.1in}
\section{Results and Discussion}
\label{result_sec}
In Table \ref{table_acc}, we summarize our results. We refer to FedAvg+MAN, FedDC+MAN, FedDyn+MAN, FedDC+MAN, FedSpeed+MAN and FedSAM/ASAM+MAN when we use the algorithms FedAvg, FedDC, FedDyn, FedSpeed, and FedSAM/ASAM respectively, along with our flatness inducing regularizer MAN ( see Sec.\textcolor{red}{8} of supplementary for algorithm details). From Table \ref{table_acc}, we observe that on CIFAR-100, FedDC attains an accuracy of $52.02\%$ and it attains the $50\%$ accuracy is 294 rounds for $\delta=0.3$ whereas FedDC+MAN attains the accuracy of $55.21\%$ and it attained $50\%$ accuracy in just 144 rounds for the same $\delta=0.3$, thus leading to an improved performance by $3.2\%$ and saving $150$ rounds of communication.  
Similarly we can see that FedDC+MAN improves FedDC by $2.76\%$, $3.5\%$ for  $\delta = 0.6$ and iid data partitions, respectively. It also saves the $175$ and $133$ rounds of communication on $\delta = 0.6$ and iid data partitions, respectively.  
For the Tiny-ImageNet dataset, we improve the performance of FedDC by $4.2\%$, $4.5\%$, $5.1\%$ for $\delta = 0.3$, $\delta = 0.6$ and iid data partitions respectively. FedDC+MAN also saves $97$,$127$ and $140$ rounds of communication compared to FedDC $\delta = 0.3$, $\delta = 0.6$ and iid data partitions respectively. Similar improvements can be seen for FedDyn, FedSpeed, FedSAM/ASAM as well. 
To get smoother estimates, we follow the protocol of~\cite{acar2021federated}, where we take the average of all the client models for reporting accuracy. 
In Figures~\ref{cifar100_perf} and~\ref{TinyImageNet_perf}, the performance of the algorithms FedAvg, FedDyn, and FedDC, are compared with flatness-constrained versions FedAvg+MAN, FedDyn+MAN, and FedDC+MAN. Clearly, our flatness-constrained version of algorithms significantly performs better. Figures~\ref{cifar100_perf} and~\ref{TinyImageNet_perf} are generated for a single training seed. The accuracy vs communication plots for FedSAM/ASAM, and FedSpeed with and without MAN are given in the Sec. \textcolor{red}{5.1} of the supplementary.
\begin{figure}[htp]
 \centering
  \begin{subfigure}[b]{0.29\textwidth}
  \centering
   \includegraphics[width=0.9\linewidth]{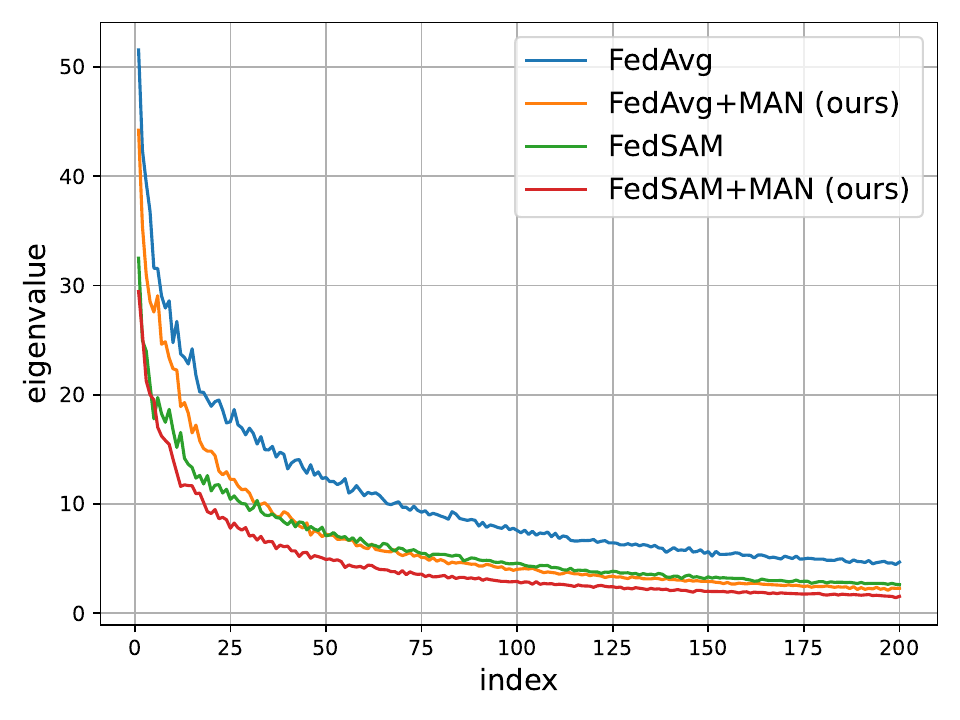}
   \caption{Top 200 eigenvalues comparison}
   \label{fig:Ng1} 
\end{subfigure}

\begin{subfigure}[b]{0.29\textwidth}
  \centering
   \includegraphics[width=0.9\linewidth]{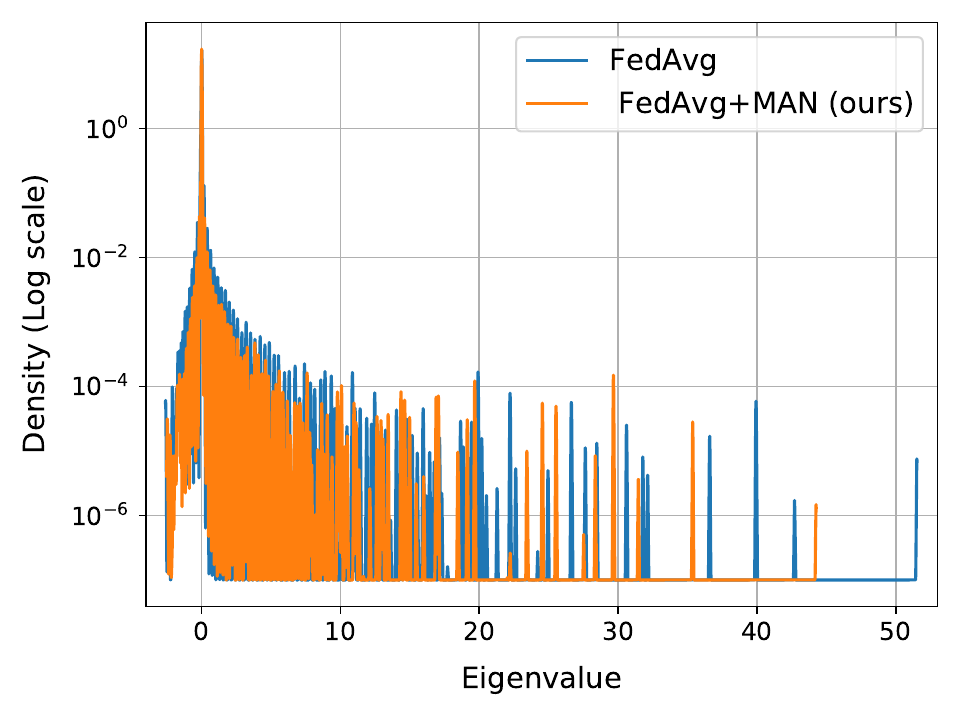}
   \caption{Eigen spectral density}
   \label{fig:Ng2}
\end{subfigure}
  \caption{Figure~\ref{fig:Ng1} shows the comparison of top 200 eigenvalues of FedAvg/FedAvg+MAN and FedSAM/FedSAM+MAN. It can be seen MAN regularizer reduces the top eigenvalues. This explains the reduction of the trace observed. From the Figure~\ref{fig:Ng2} we see negative eigenvalues contributes little to the trace.  }
  \label{eig_density}
\end{figure}

\begin{table}[htp]
\centering
\caption{Comparison of top eigenvalues and trace of the algorithms with and without MAN regularizer, lower values are better. We can observe that by
augmenting MAN regularization i.e. FedAvg+MAN, FedSAM+MAN, etc.,  we obtain lower trace and lower top eigenvalues, which is indicative of flat minimum, and hence it attains better accuracy.}
\scalebox{0.65}{
\begin{tabular}{l|cccc}
\toprule
                         & \multicolumn{4}{c}{CIFAR-100}                                                                                                                       \\ \cline{2-5} 
                         & \multicolumn{2}{c|}{$\delta = 0.3$}                                   & \multicolumn{2}{c}{$\delta = 0.6$}                                                \\ \cline{2-5} 
\multirow{-3}{*}{Method} & \multicolumn{1}{l}{\begin{tabular}[c]{@{}c@{}}Top \\ eigenvalue\end{tabular}} & \multicolumn{1}{l|}{Trace}   & \multicolumn{1}{l}{\begin{tabular}[c]{@{}c@{}}Top \\ eigenvalue\end{tabular}}            & \multicolumn{1}{l}{Trace}     \\ \midrule
FedAvg                   
& \multicolumn{1}{c}{51.49}          
& \multicolumn{1}{c|}{8744} 
& \multicolumn{1}{c}{53.39}                     
& 9056                        
\\ 
FedAvg+MAN               & \multicolumn{1}{c}{\textbf{43.80}}          
& \multicolumn{1}{c|}{\textbf{4397}}
& \multicolumn{1}{c}{\textbf{42.00}} 
& {\textbf{4747}} 
\\ \hline
FedSAM                   
& \multicolumn{1}{c}{32.46}          
& \multicolumn{1}{c|}{4909} 
& \multicolumn{1}{c}{35.32}                     
& 5160                        \\ 
FedSAM+MAN               
& \multicolumn{1}{c}{\textbf{29.29}}          
& \multicolumn{1}{c|}{\textbf{2709}} 
& \multicolumn{1}{c}{\textbf{30.05}}                     
& \textbf{2918}   
\\ \hline
FedDyn                    
& \multicolumn{1}{c}{51.03}          
& \multicolumn{1}{c|}{6400} 
& \multicolumn{1}{c}{47.03}                     
& 6717                       \\ 
FedDyn+MAN                
& \multicolumn{1}{c}{\textbf{44.84}}          
& \multicolumn{1}{c|}{\textbf{3964}} 
& \multicolumn{1}{c}{\textbf{38.44}}                     
& \textbf{3966} 
\\ \hline
FedDC                    
& \multicolumn{1}{c}{35.13}          
& \multicolumn{1}{c|}{3578} 
& \multicolumn{1}{c}{35.13}                     
& 3578                       \\ 
FedDC+MAN                
& \multicolumn{1}{c}{\textbf{32.99}}          
& \multicolumn{1}{c|}{\textbf{2974}} 
& \multicolumn{1}{c}{\textbf{32.99}}                     
& \textbf{2974}                        \\ \bottomrule
\end{tabular}
}

\label{sam_analysis}
\end{table}

\subsection{Empirical Analysis of Hessian}
\label{emp_hess_analysis}
In Fig~\ref{eig_density} and Table~\ref{sam_analysis}, we perform the Hessian Analysis of the proposed MAN regularizer on the CIFAR-100 and compare it against the baseline algorithms. We observe empirically that when our regularizer MAN is combined with FedAvg, FedDyn and FedDC, it attains a flatter minimum, which is quantified by reduction in top eigenvalue and the trace of the Hessian. The top eigenvalue is a key indicator of better generalization~\cite{yao2020pyhessian,Mendieta_2022_CVPR,keskar2017on}. Due to space constraints, More results of Hessian analysis on the remaining algorithms are presented in Sec \textcolor{red}{5.3} of supplementary. To better understand the reduction of the trace, we have plotted the top 200 eigenvalues for FedAvg, FedAvg+MAN, FedSAM and FedSAM+MAN. We can see that MAN regularizer not just reduces the top eigenvalue but also reduces the other eigenvalues as well. This is the reason for reduction of the trace. It is important to emphasize that the trace can be lower when negative eigenvalues also contribute significantly. Since we are evaluating the trace at the convergence, we observe that the contribution of negative eigenvalues is negligible, and the trace is dominated by positive eigenvalues. The eigen spectral density Fig~\ref{fig:Ng2} confirms this observation. We can also observe that FedAvg+MAN has lower trace compared to FedAvg+MAN from the table~\ref{sam_analysis}. This can be observed from Fig~\ref{fig:Ng1}, where after the first ~$50$ eigen values FedAvg+MAN has lower eigenvalues consistently. This suggests that MAN regularizer reduces many eigenvalues which is a cause for it's better generalization.

\vspace{-0.3in}.
\section{ Analyzing Computational Cost}
We now analyze the total computation cost incorporating the proposed regularizer. We analyze the total number of multiplications in a forward pass at a specific layer to measure the computation. Let $N_b$, $C_i$, $H_i$, $W_i$ denote the batch size, input channels, height, and width of the activation map, which is fed to a convolutional layer. We assume the kernel size to be $C_i\times K \times K$ and a number of such filters to be $C_o$. Thus the output is represented by $N_{b}$, $C_o$, $H_o$, $W_o$. Note that batch size $N_b$ remains the same. 
To compute the single entry in the output activation map, we need $C_iK^2$ multiplications. For the entire spatial dimension, we need $H_oW_oC_iK^2$, and for all the output channels $C_o$, we need $C_oH_oW_oC_iK^2$ and for the batch of $N_b$ we have 
$N_bC_oH_oW_oC_iK^2$ operations.
\begin{table}[htp]
\centering

\caption{computations without the regularizer and only regularizer.}
\vspace{-2mm}
\scalebox{0.7}{
\begin{tabular}{l|l|l}
\hline
Forward Computation    & CONV Layer & FC Layer \\ 
\hline
Without Regularizer                     & $N_bC_oH_oW_oC_iK^2$            & $N_bd_id_o$  
\\ \hline
Only Regularizer                 & $N_bH_oW_oC_o$                  & $N_bd_o$  
\\ \hline
\end{tabular}
}

\label{computation_cost}
\end{table}
Similarly, let $d_i$ be the input features and $d_o$ be the output feature dimension for a fully connected layer in the network. We need a total of $N_bd_id_o$ operations for fully connected layers. This is summarized in Table~\ref{computation_cost}.
The total cost  for the convolutional (CONV) layer denoted by $TC_{conv} $ with our regularizer is, $TC_{conv} =  N_bC_oH_oW_o(1+C_iK^2$). Typicaly the $C_iK^2 >>1$, this would mean that $TC_{conv}\approx N_bC_oH_oW_oC_iK^2$, which is same as the cost without regularizer from Table~\ref{computation_cost}.  
The total cost for fully-connected (FC) layer including our regularizer is denoted by $TC_{fc}$ is ($TC_{fc} = N_bd_o(1+d_i)$). Since $d_i >>1$ we approximate the total cost for FC layer as $TC_{fc} \approx N_bd_od_i$. This shows that our regularizer incurs negligible cost compared to forward operations without a regularizer for both the convolutional and fully connected layers. Similar analysis can be done for backward pass operations. Our regularizer incurs half the computation required by SAM-based methods, as the SAM based methods require gradient ascent followed by gradient descent. 
\section{Conclusion}
\vspace{-0.06in}
 In this paper, we introduce a novel problem formulation in the context of federated learning that includes flatness constraints. By doing so, we demonstrate that by having the client models converge to a flat minimum, the global model also converges to a flat minimum. Additionally, we have simplified the problem at the client level by minimizing activation norms, and we have shown theoretically that this approach can minimize the layer-wise top eigenvalues of the Hessian of the client's loss, which, in turn, leads to minimizing the top eigenvalue of the overall Hessian of the loss. Our proposed methodology can be seamlessly integrated atop existing FL algorithms. In particular, we have integrated our flatness constraint objective on top of popular FL methods such as FedAvg, FedDyn, FedDC and SAM-based methods such as FedSAM/ASAM and FedSpeed. We have shown that this can significantly improve the performance of these baselines. Furthermore, we have demonstrated that our method incurs negligible computation cost after incorporating our regularizer. Our work presents a promising new approach for incorporating flatness constraints as a computationally efficient regularizer into FL algorithms that can lead to better generalization performance. Our work can serve as a beginning for the development of computationally efficient algorithms for inducing flatness constraints.
 
 {\small \noindent \textbf{Acknowledgement}: M. Yashwanth and Harsh Rangwani are supported by the Prime Minister’s Research Fellowship (PMRF). We are thankful for the support provided.} 

{\small
\bibliographystyle{ieee_fullname}
\bibliography{egbib}
}
\clearpage

\twocolumn[
\begin{center}
    {\LARGE \bf  Supplementary material for ``Minimizing Layerwise Activation Norm Improves Generalization in Federated Learning"}
\end{center}
\vspace{2em}
]

\setcounter{section}{0}


\section{Notations and Preliminaries}
We 
describe the necessary preliminaries and notation below. 
\subsection{Notations}
By default, we assume the notation ${\lVert . \rVert}$ for 2-norm. ${\lVert . \rVert}_{F}$ is the Frobenious norm. ${\lVert . \rVert}_{2}$ is the usual euclidean norm. $\lambda_{max}(\textbf{A})$ is maximum eigenvalue of matrix \textbf{A}. $\odot$ denotes the element-wise product of two matrices or vectors. $\otimes$ denotes the Kronecker product of two matrices or vectors. $\nabla_{\textbf{b}}{\textbf{a}}$ denotes the Jacobian of $\textbf{a}$ w.r.t \textbf{b}.
eig(\textbf{A}) denotes the eigenvalues of matrix \textbf{A}.
$vec(\textbf{A})$ denotes the vectorization operation. If $\textbf{A} \in \mathbb{R}^{n_1 \times n_2}$, then $vec(\textbf{A}) \in \mathbb{R}^{n_1n_2 \times 1}$($n_2$ columns of \textbf{A} are stacked one after other).
$\mathbf{H}_{\mathbf{W}}{(\mathcal{L})}$ is Hessian of Loss $\mathcal{L}$ with respect to parameters $\mathbf{W}$. $\lambda(\mathbf{H})$ denotes eigenvalue of $\mathbf{H}$. $\mathcal{O}$ denotes the usual Big-$\mathbf{O}$ notation.

\subsection{Preliminaries} 




If $\textbf{x} \in \mathbb{R}^{d\times 1}$
\begin{equation}
eig(\textbf{x} \textbf{x}^{\top}) = {\lVert \textbf{x} \rVert}_2^2
\label{out_eig_eq}    
\end{equation}

\begin{equation}
(\textbf{A} \otimes \textbf{B})^{\top} = \textbf{A}^{\top} \otimes \textbf{B}^{\top}
\label{pl2}
\end{equation}

If \textbf{A},\textbf{B},\textbf{C},\textbf{D} matrices of compatible dimensions then the following holds

\begin{equation}
(\textbf{A} \otimes \textbf{B})(\textbf{C} \otimes \textbf{D}) = \textbf{A}\textbf{C} \otimes \textbf{B}\textbf{D}
\label{pl3}
\end{equation}

\begin{equation}
(\textbf{A} \otimes \textbf{B})\textbf{C} = \textbf{A}\textbf{C} \otimes \textbf{B}
\label{pl3}
\end{equation}

\begin{equation}
eig(\textbf{A} \otimes \textbf{B}) = eig(\textbf{A}) \otimes eig(\textbf{B}).  
\label{pl4}
\end{equation}

For any symmetric matrices $\textbf{S}_1$ and $\textbf{S}_2$, the following holds.

\begin{equation}
\lambda_{max}(\textbf{S}_1 + \textbf{S}_2) \leq  \lambda_{max}(\textbf{S}_1) + \lambda_{max}(\textbf{S}_2)
\label{pl5}
\end{equation}
We now provide proof of the theorems in the next section.
\section{Proof for the Theorems in the main paper}
\begin{theorem}
If $\mathbf{H}_{ll} \in R^{d_l}$  denotes the layer $l$ Hessian and $\mathbf{H} \in R^{d}$ denotes the over all Hessian and $\sum_{l=1}^{L}{d_l} = d$, where $L$ is the total number of layers. If the Hessian entries are bounded above we then have the following result.
$\lambda(\mathbf{H}) \in \cup_{l=1}^{L} [ \lambda_{min}(H_{ll}) - \mathcal{O}({max(d_l,d-d_l)}), \lambda_{max}(H_{ll}) + \mathcal{O}({max(d_l,d-d_l)})] $
\label{hess_theorem0}
\end{theorem}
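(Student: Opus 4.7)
The approach is to view the overall Hessian $\mathbf{H}\in\mathbb{R}^{d\times d}$ as a symmetric block matrix whose diagonal blocks are the layer-wise Hessians $\mathbf{H}_{ll}\in\mathbb{R}^{d_l\times d_l}$ and whose off-diagonal blocks $\mathbf{H}_{lj}$ (for $l\neq j$) collect the cross-layer second derivatives. I would decompose $\mathbf{H}=\mathbf{H}_D+\mathbf{H}_O$ where $\mathbf{H}_D$ is the block-diagonal part, whose spectrum is exactly $\bigcup_{l=1}^{L}\mathrm{spec}(\mathbf{H}_{ll})$, and $\mathbf{H}_O$ holds all off-diagonal blocks. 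The workhorse will be the block Gershgorin theorem of Feingold--Varga, which for symmetric block matrices says that every eigenvalue $\lambda(\mathbf{H})$ lies in
\[
\bigcup_{l=1}^{L}\bigl[\lambda_{\min}(\mathbf{H}_{ll})-r_l,\ \lambda_{\max}(\mathbf{H}_{ll})+r_l\bigr],\qquad r_l=\sum_{j\neq l}\|\mathbf{H}_{lj}\|,
\]
so the whole task reduces to showing $r_l=\mathcal{O}(\max(d_l,d-d_l))$.

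For the bound on $r_l$, I would use the boundedness-of-entries hypothesis: if every entry of $\mathbf{H}$ has magnitude at most some constant $M$, then for each cross block $\mathbf{H}_{lj}\in\mathbb{R}^{d_l\times d_j}$ the Frobenius bound gives $\|\mathbf{H}_{lj}\|\leq\|\mathbf{H}_{lj}\|_F\leq M\sqrt{d_l d_j}$. Summing over $j\neq l$ and applying Cauchy--Schwarz together with $\sum_{j\neq l}d_j=d-d_l$,
\[
r_l\ \leq\ M\sqrt{d_l}\sum_{j\neq l}\sqrt{d_j}\ \leq\ M\sqrt{d_l}\sqrt{(L-1)(d-d_l)}\ =\ \mathcal{O}\!\left(\sqrt{d_l(d-d_l)}\right).
\]
Because $\sqrt{ab}\leq\max(a,b)$ for non-negative $a,b$, this collapses to $r_l=\mathcal{O}(\max(d_l,d-d_l))$, and substituting into the block Gershgorin inclusion above yields exactly the theorem statement.

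The main obstacle is invoking the block Gershgorin inclusion cleanly: the standard reference phrases it in terms of the singular-value condition $\|(\mathbf{H}_{ll}-\lambda I)^{-1}\|^{-1}\leq r_l$, and one has to observe that for symmetric $\mathbf{H}_{ll}$ this quantity equals $\min_i|\lambda-\lambda_i(\mathbf{H}_{ll})|$, which is what converts the disk/resolvent statement into the interval $[\lambda_{\min}(\mathbf{H}_{ll})-r_l,\lambda_{\max}(\mathbf{H}_{ll})+r_l]$. If one prefers to avoid that machinery, an alternative route is to apply Weyl's inequality directly to $\mathbf{H}=\mathbf{H}_D+\mathbf{H}_O$ and then localize the perturbation layer by layer via the block-row/column structure of $\mathbf{H}_O$; the per-layer operator-norm bound on $\mathbf{H}_O$ restricted to rows associated with block $l$ is again $\mathcal{O}(\sqrt{d_l(d-d_l)})$ by the same Frobenius estimate, which gives the union-over-$l$ form rather than a single global $\|\mathbf{H}_O\|$ worst case. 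Either route reaches the claimed bound; the delicate step is making sure the per-layer localization — and not a uniform $\mathcal{O}(d)$ bound — is what enters the final inclusion.
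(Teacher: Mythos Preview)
Your proposal is correct and follows essentially the same route as the paper: the paper also proves the block Gershgorin inclusion (from scratch, by partitioning an eigenvector and selecting the block $l$ with maximal $\|\mathbf{x}_l\|$, rather than citing Feingold--Varga) to obtain $\min_i|\lambda-\lambda_i(\mathbf{H}_{ll})|\leq\sum_{j\neq l}\|\mathbf{H}_{lj}\|$, and then bounds the right-hand side via $\|\mathbf{H}_{lj}\|\leq\|\mathbf{H}_{lj}\|_F\leq\sqrt{B\,d_l d_j}$ together with Cauchy--Schwarz to reach $\mathcal{O}(\sqrt{d_l(d-d_l)})\leq\mathcal{O}(\max(d_l,d-d_l))$. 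Your identification of the symmetric-block conversion $\|(\mathbf{H}_{ll}-\lambda I)^{-1}\|^{-1}=\min_i|\lambda-\lambda_i(\mathbf{H}_{ll})|$ as the delicate step matches the paper's argument exactly.
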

\begin{proof}

Let $\mathbf{x} \in R^{d \times 1}$ be the eigen vector of $\mathbf{H}$ be the Hessian matrix and it is symmetric partitioned with $\mathbf{H}_{ij} \in R^{d_i \times d_j}$. The block diagonal matrices are the ones with $i=j$ and there are $L$ such matrices along the diagonal. We also assume that $\mathbf{x}$ is partitioned into $L$ vectors as 
$\mathbf{x} = {[ {\mathbf{x}_1}^{\intercal},{\mathbf{x}_2}^{\intercal},..., {\mathbf{x}_L}^{\intercal} ]}^{\intercal}$ where $\mathbf{x}_j \in R^{d_j \times 1}$.

Since $\mathbf{x}$ is assumed to be the eigenvector and $\lambda$ be associated eigenvalue  we have the following
\begin{equation}
\lambda\mathbf{x}_l = \mathbf{H}_{ll}\mathbf{x}_l + \sum_{j=1,j\neq l }^{L}{\mathbf{H}_{lj}} \mathbf{x}_j
\label{th_eig_eq1}
\end{equation}

Here $\mathbf{x}_l$ is chosen such that $\lVert \mathbf{x}_l \rVert \geq \lVert \mathbf{x}_i \rVert$ for all $i$. Throughout the proof we mean  $\lVert . \rVert$ as 2-norm. 

Taking the norm on Eq.~\ref{th_eig_eq1} we get the following

\begin{dmath}
\lVert(\lambda \mathbf{I} - \mathbf{H}_{ll}) \mathbf{x}_l \rVert =   \sum_{j=1,j\neq l }^{L} {\lVert {{\mathbf{H}_{lj}} \mathbf{x}_j } \rVert} 
\leq 
  \sum_{j=1,j\neq l }^{L} {\lVert {{\mathbf{H}_{lj}} \rVert \lVert \mathbf{x}_j \rVert }}
\label{th_eig_eq2_}
\end{dmath}

The first inequality is by triangle inequality and the second is by the definition of norm
$\lVert \mathbf{A}\mathbf{x} \rVert \leq \lVert \mathbf{A} \rVert \lVert\mathbf{x} \rVert$.

Dividing the equation~\ref{th_eig_eq2_} by $\lVert \mathbf{x}_l \rVert$ we get the following
\begin{dmath}
\frac{\lVert(\lambda \mathbf{I} - \mathbf{H}_{ll}) \mathbf{x}_l \rVert}{\lVert  \mathbf{x}_l \rVert } \leq 
\sum_{j=1,j\neq l }^{L} {\lVert {{\mathbf{H}_{lj}} \rVert \frac{\lVert \mathbf{x}_j \rVert } {{\lVert  \mathbf{x}_l \rVert}}}}
\leq
\sum_{j=1,j\neq l }^{L} {\lVert {{\mathbf{H}_{lj}} \rVert}}
\label{th_eig_eq2}
\end{dmath}

The second inequality follows as ${\lVert  \mathbf{x}_j \rVert} \leq {\lVert  \mathbf{x}_l \rVert}$.

It is easy to see that 
\begin{equation}
\frac{\lVert(\lambda \mathbf{I} - \mathbf{H}_{ll}) \mathbf{x}_l \rVert}{\lVert  \mathbf{x}_l \rVert }\geq \underset{i}{min}{{\lvert \lambda - \lambda_i(\mathbf{H}_{ll}) \rvert}}
\label{th_eig_eq3}
\end{equation}
as $(\lambda \mathbf{I} - \mathbf{H}_{ll})$ is a symmetric matrix.
Here $\lambda_1(\mathbf{H}_{ll}) \geq \lambda_2(\mathbf{H}_{ll}) 
 ...  \geq \lambda_{d_l}(\mathbf{H}_{ll})$
using Eq~\ref{th_eig_eq2} and Eq~\ref{th_eig_eq3} we get the following. 
\begin{equation}
\underset{i}{min}{{\lvert \lambda - \lambda_i(\mathbf{H}_{ll}) \rvert}} \leq \sum_{j=1,j\neq l }^{L} {\lVert {{\mathbf{H}_{lj}} \rVert}}
\label{th_eig_eq4}
\end{equation}

This implies that any of the following is true 

${{\lvert \lambda - \lambda_1(\mathbf{H}_{ll}) \rvert}} \leq \sum_{j=1,j\neq l }^{L} {\lVert {{\mathbf{H}_{lj}} \rVert}}$ or ${{\lvert \lambda - \lambda_2(\mathbf{H}_{ll}) \rvert}} \leq \sum_{j=1,j\neq l }^{L} {\lVert {{\mathbf{H}_{lj}} \rVert}}$ or the ${{\lvert \lambda - \lambda_{d_l}(\mathbf{H}_{ll}) \rvert}} \leq \sum_{j=1,j\neq l }^{L} {\lVert {{\mathbf{H}_{lj}} \rVert}}$.

Hence, we take the worst-case possibility that contains all the regions i.e., 
\begin{equation}
{{\lambda \leq \lambda_1(\mathbf{H}_{ll})}} +  \sum_{j=1,j\neq l }^{L} {\lVert {{\mathbf{H}_{lj}} \rVert}}
\label{th_eig_eq5}
\end{equation}

\begin{equation}
{{ \lambda }} \geq \lambda_{d_l}(\mathbf{H}_{ll}) -\sum_{j=1,j\neq l }^{L} {\lVert {{\mathbf{H}_{lj}} \rVert}} 
\label{th_eig_eq6}
\end{equation}
Note that $\lambda_1({\mathbf{H}_{ll}}) \triangleq \lambda_{max}({\mathbf{H}_{ll}})$ and $\lambda_{d_l}({\mathbf{H}_{ll}}) \triangleq \lambda_{min}({\mathbf{H}_{ll}})$.

It remains to show that $\sum_{j=1,j\neq l }^{L} {\lVert {{\mathbf{H}_{lj}} \rVert}} \leq \mathcal{O}(max(d_l,d-d_l))$.

\begin{equation}
{\lVert {{\mathbf{H}_{lj}} \rVert}}^2 \leq {\lVert {{\mathbf{H}_{lj}} \rVert}}^2_F \leq {B d_l d_j}
\label{th_eig_eq7}
\end{equation}
where ${\lVert {\mathbf{H}_{lj}} \rVert}_F$ is the Frobenious norm of the matrix.
The first inequality in Eq.~\ref{th_eig_eq7} is because the 2-norm is, at most, the Frobenious norm, and the second inequality is from the assumption of the theorem that each entry is bounded above by $B$.

\begin{dmath}
\sum_{j=1,j\neq l }^{L} {\lVert {{\mathbf{H}_{lj}} \rVert}} \leq 
\sqrt{(L-1)\sum_{j=1,j\neq l }^{L} {\lVert {{\mathbf{H}_{lj}} \rVert}}^2} \leq \sqrt {B(L-1)d_l (d-d_l)}
\leq \mathcal{O}(max(d_l,d-d_l)).
\label{th_eig_eq8}
\end{dmath}

The first inequality uses Cauchy Schwartz inequality, and the second inequality uses Eq.~\ref{th_eig_eq7} and also the fact that $\sum_{i=1}^{i=1}{d_i} = d$ and the third inequality follows by definition of $\mathcal{O}$. From Eq.~\ref{th_eig_eq5}, Eq.~\ref{th_eig_eq6} and Eq.~\ref{th_eig_eq8} we have shown that $\lambda(\mathbf{H}) \in [ \lambda_{min}(H_{ll}) - \mathcal{O}({max(d_l,d-d_l)}), \lambda_{max}(H_{ll}) + \mathcal{O}({max(d_l,d-d_l)})] $.
But the $l$ can be anywhere from $l=1$ to $l=L$ hence we take the union of all the possible regions and hence we get the desired result
$\lambda(\mathbf{H}) \in \cup_{l=1}^{L} [ \lambda_{min}(H_{ll}) - \mathcal{O}({max(d_l,d-d_l)}), \lambda_{max}(H_{ll}) + \mathcal{O}({max(d_l,d-d_l)})] $

\end{proof}
The general versions of the Gershgorin theorem to block matrices and is studied in~\cite{feingold1962block,salas1999gershgorin,tretter2008spectral}. We presented the proof
for the Hessian matrices, which are symmetric, by simply extending the usual Gershgorin circle theorem to block matrices. We bound the eigenvalues of the Hessian in terms of the min and max eigenvalues of the layerwise Hessian.

We consider the $C$ class classification problem. The training set $S = \{(\mathbf{x}^i,\mathbf{y}^i)\}_{i=1}^{N}$ is considered, where each $\mathbf{x}^i \in \mathbb{R}^D$ or $\mathbf{x}^i \in \mathbb{R}^{C\times H \times W}$and $\mathbf{y}^i \in \{0,1\}^C$ is drawn iid from the distribution $\mathcal{D}$. We then consider an $L$-layer neural network with ReLU non-linearity, where the network outputs the logits $\mathbf{z}^i$. The logits are obtained by a series of fully connected (FC)/convolutional (CONV) layers, followed by a non-linearity, represented concisely by Eq.~\ref{forward_affine} for an FC layer with parameters ($\{\mathbf{W}_l,\mathbf{b}_{l}\}$) and Eq.~\ref{forward_conv} for a CONV layer with parameters ($\{\mathbf{W}_l,\mathbf{b}_{l}\}$). We denote the collection of all the model parameters as $\theta \coloneqq \{\mathbf{W}_1, \mathbf{b}_1 ...\mathbf{W}_L, \mathbf{b}_L \}$ and $\theta \in \mathbb{R}^d$ where all the model parameters rolled into a single vector of dimension $d$.
Let $f_\theta(\mathbf{x^i})$ denote the final layer output of the model 
\begin{equation}
\mathbf{z}_l^i = \text{FC}(\mathbf{a}_{l-1}^i;\{\mathbf{W}_l,\mathbf{b}_{l}\})
\label{forward_affine}
\end{equation}
\begin{equation}
\mathbf{z}_l^i = \text{CONV}(\mathbf{a}_{l-1}^i;\{\mathbf{W}_l,\mathbf{b}_{l}\})
\label{forward_conv}
\end{equation}
\begin{equation}
\mathbf{a}_l^i = \sigma(\mathbf{z}_l^i) 
\end{equation}
where $\sigma(.)$ denotes non-linearity $\mathbf{a}_{0} = \mathbf{x}^i$, $\mathbf{z}^i = \mathbf{a}_L^i = f_\theta({\mathbf{x^i}})$. Finally, we use the cross-entropy loss 
\begin{equation}
\mathcal{L}(\mathbf{y}^i,\mathbf{z}^i) = \sum_{c=1}^{C}{-\mathbf{y}^i[c]log(\hat{\mathbf{y}}^i[c])}.
\label{softmax_eq}
\end{equation}
where $\hat{\mathbf{y}}$ is the softmax on logits $\mathbf{z}^i$ as 
\begin{equation}
\hat{\mathbf{y}}=  exp(\mathbf{z}^i)/\sum_{m=1}^{C}{exp(\mathbf{z}^i[m])})
\label{softmax_eqn}
\end{equation}
where $\mathbf{x}^i$,$\mathbf{y}^i$ denotes the $i^{th}$ input sample and label respectively. 
We use the notation $\mathcal{L}^i$ for $\mathcal{L}(\mathbf{y}^i,\mathbf{z}^i)$. The overall Loss computed on Batch size of B is denoted by 
\begin{equation}
\mathcal{L}={\frac{1}{B}}\sum_{i=1}^{B}{\mathcal{L}^i}
\label{sup:ce_loss}
\end{equation}

We have the following Lemma due to~\cite{wu2020dissecting}.
We denote $\theta \in \mathbf{R}^d$ as the collection of all the parameters, where $d$ denotes the total number of parameters.
\begin{lemma}
For the Network described in Eq.~\ref{forward_affine} to Eq.~\ref{sup:ce_loss}. The Hessian of loss $\mathcal{L}^i$ with respect to weights of FC layer  $\mathbf{W}_{l}$ denoted by $H_{\mathbf{W}_{l}}(\mathcal{L}^i)$ is given by $H_{\mathbf{W}_{l}}(\mathcal{L}^i) = \mathbf{M}_l(x^i,\theta) \otimes \mathbf{a}_{l-1}^i {\mathbf{a}^i}^{\top}_{l-1}$, where $\mathbf{M}_l(x^i)$ is a symmetric matrix.
\label{fc_lemma}
\end{lemma}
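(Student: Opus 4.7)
The plan is to exploit the fact that $\mathcal{L}^i$ depends on $\mathbf{W}_l$ only through the \emph{affine} map $\mathbf{W}_l \mapsto \mathbf{z}_l^i = \mathbf{W}_l\mathbf{a}_{l-1}^i + \mathbf{b}_l$, and then to apply the second-order chain rule followed by a Kronecker-product simplification.

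First I would isolate how $\mathbf{W}_l$ enters the loss. Since $\mathbf{a}_{l-1}^i$ is determined by $\mathbf{W}_1,\dots,\mathbf{W}_{l-1}$ only, it is independent of $\mathbf{W}_l$. Hence I can write $\mathcal{L}^i = g(\mathbf{z}_l^i)$, where $g$ absorbs the later layers together with the softmax and cross-entropy. Using the standard identity $\mathrm{vec}(\mathbf{A}\mathbf{X}\mathbf{B}) = (\mathbf{B}^\top\otimes\mathbf{A})\,\mathrm{vec}(\mathbf{X})$ applied to $\mathbf{z}_l^i = I_{d_l}\,\mathbf{W}_l\,\mathbf{a}_{l-1}^i$, the Jacobian $\mathbf{J}_l$ of $\mathbf{z}_l^i$ with respect to $\mathrm{vec}(\mathbf{W}_l)$ is a constant matrix built as an identity-and-$\mathbf{a}_{l-1}^i$ Kronecker product (the precise ordering tracks the column/row-major vec convention, which the paper fixes in its preliminaries). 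Crucially, $\mathbf{J}_l$ does not depend on $\mathbf{W}_l$.

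Next I would invoke the second-order chain rule. Because $\phi : \mathrm{vec}(\mathbf{W}_l) \mapsto \mathbf{z}_l^i$ is affine, its own Hessian vanishes and the usual Gauss--Newton-plus-curvature decomposition collapses to the first term alone, giving
\begin{equation*}
H_{\mathbf{W}_l}(\mathcal{L}^i) \;=\; \mathbf{J}_l^{\!\top}\,\mathbf{M}_l\,\mathbf{J}_l, \qquad \mathbf{M}_l \coloneqq \nabla^2_{\mathbf{z}_l^i} g.
\end{equation*}
Symmetry of $\mathbf{M}_l$ is automatic, as it is the Hessian of a scalar function of $\mathbf{z}_l^i$ (defined almost everywhere for ReLU networks). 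The dependence of $\mathbf{M}_l$ on $(x^i,\theta)$ arises because evaluating $g''$ at $\mathbf{z}_l^i$ requires propagating $x^i$ through all earlier layers and using the downstream weights. The last step is purely algebraic: by the mixed-product identity $(\mathbf{A}\otimes\mathbf{B})(\mathbf{C}\otimes\mathbf{D}) = (\mathbf{AC})\otimes(\mathbf{BD})$ together with $\mathbf{M}_l = \mathbf{M}_l\otimes 1$, the expression $\mathbf{J}_l^{\!\top}\mathbf{M}_l\mathbf{J}_l$ telescopes into $\mathbf{M}_l \otimes \mathbf{a}_{l-1}^i\,\mathbf{a}_{l-1}^{i\,\top}$ (up to the factor swap induced by column- versus row-major vec; both orderings encode the same underlying $4$-tensor $\partial^2\mathcal{L}^i/\partial W_l[\alpha,\beta]\partial W_l[\gamma,\delta] = \mathbf{M}_l[\alpha,\gamma]\,\mathbf{a}_{l-1}^i[\beta]\,\mathbf{a}_{l-1}^i[\delta]$). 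As a sanity check I would also verify this directly from $\mathbf{z}_l^i[\alpha] = \sum_\beta W_l[\alpha,\beta]\,\mathbf{a}_{l-1}^i[\beta] + b_l[\alpha]$ at the index level.

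The main obstacle I anticipate is conceptual rather than computational: when differentiating $\nabla_{\mathbf{z}_l^i} g$ a second time, one must recognize that although this gradient depends in a complicated way on $\mathbf{z}_l^i$ through every subsequent layer, its dependence on $\mathbf{W}_l$ is \emph{entirely} mediated by the single vector $\mathbf{z}_l^i$ (because $\mathbf{a}_{l-1}^i$ and the downstream weights are all held fixed). This single-channel dependence is precisely what lets the Hessian collapse to the clean quadratic form $\mathbf{J}_l^{\!\top}\mathbf{M}_l\mathbf{J}_l$ and, after the Kronecker manipulation, to the advertised rank-structured factorization. The nonsmoothness of ReLU is a minor secondary caveat, handled by interpreting derivatives almost everywhere.
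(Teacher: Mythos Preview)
Your proposal is correct and mirrors the paper's own argument essentially step for step: both write $\mathcal{L}^i$ as a function of $\mathbf{z}_l^i$, apply the second-order chain rule, note that the affine map $\mathbf{W}_l\mapsto\mathbf{z}_l^i$ kills the second term, compute the Jacobian as $\mathbf{I}_{d_l}\otimes\mathbf{a}_{l-1}^{i\top}$, and then collapse $\mathbf{J}_l^\top\mathbf{M}_l\mathbf{J}_l$ via the mixed-product identity to $\mathbf{M}_l\otimes\mathbf{a}_{l-1}^i\mathbf{a}_{l-1}^{i\top}$ with $\mathbf{M}_l=H_{\mathbf{z}_l^i}(\mathcal{L}^i)$. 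Your additional remarks on vec conventions and ReLU nonsmoothness are sound but not needed for the core derivation.
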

\begin{proof}
For the detailed derivation, please refer to Appendix A1 of~\cite{wu2020dissecting}. We present the proof for the sake of completeness.
We fix a layer $l$ for which we want to compute the Hessian, the inputs to layer $l$ is given by $\mathbf{a}^i_{l-1}$. The layer $l$ is parameterized by $\mathbf{W}_{l}$ and $\mathbf{b}_l$.
\begin{equation}
\mathbf{z^i}_l = \mathbf{W}_l\mathbf{a^i}_{l-1} + \mathbf{b}_l
\label{affine_eq}
\end{equation}

By using the chain rule for Hessian as\cite{skorski2019chain,wu2020dissecting} we get the following.
\begin{equation}
H_{\mathbf{w}_{l}}(\mathcal{L}^i) =  
{\frac{\partial {\mathbf{z}^i_l}} {\partial {\mathbf{w}_l}}}^\top   H_{\mathbf{z}^i} (\mathcal{L}^i) {\frac{\partial {\mathbf{z}^i_l}} {\partial {\mathbf{w}_l}}} + \sum_{n = 1}^{d_l} {\frac{\partial {l(\mathbf{z}^i,\mathbf{y}^i)}} {\partial {\mathbf{z}^i[n]}}} \nabla^2_{\mathbf{w}_l}{\mathbf{z}^i[n]}  
\end{equation}

Here $\mathbf{w}_{l} \coloneqq  vec(\mathbf{W}_{l})$, $\mathbf{z}^i[n]$ is the $n^{th}$ element of the vector $\mathbf{z}^i$.   $\nabla^2_{\mathbf{w}_l}{\mathbf{z}^i[n]}$ is Hessian of ${\mathbf{z}^i[n]}$ w.r.t ${\mathbf{w}_l}$. Also note that by convention $H_{\mathbf{w}_{l}}(\mathcal{L}^i)\coloneqq H_{\mathbf{W}_{l}}(\mathcal{L}^i)$ as we are only concerned with the Hessian of the loss w.r.t to the parameters of the layer $l$ not the structure in which these parameters are present.
 
From~\ref{affine_eq} we get  the following

\begin{equation}
{\frac{\partial {\mathbf{z}^i_l}} {\partial {\mathbf{w}_l}}} = \mathbf{I}_{d_l} \otimes {\mathbf{a}^i_{l-1}}^\top
\label{do2bydow}
\end{equation}

It is easy to see that $\nabla^2_{\mathbf{w}_l}{\mathbf{z}^i[n]} = 0$ and from~\ref{do2bydow} we have the following

\begin{equation}
H_{\mathbf{w}_{l}}(\mathcal{L}^i) = ({\mathbf{I}_{d_l} \otimes \mathbf{a}^i_{l-1}}) H_{\mathbf{z}^i} (\mathcal{L}^i) ({\mathbf{I}_{d_l} \otimes {\mathbf{a}^i_{l-1}}}^\top) 
\end{equation}

The above equation can be simplified as 
\begin{equation}
 H_{\mathbf{w}_{l}}(\mathcal{L}^i) = \mathbf{M}_l(x^i,\theta) \otimes
 \mathbf{a}^i_{l-1} {\mathbf{a}^i_{l-1}}^\top
\end{equation}
where $\mathbf{M}_l(x^i,\theta) \coloneqq H_{\mathbf{z}^i} (\mathcal{L}^i)$. It can be seen that $\mathbf{M}_l(x^i,\theta)$ is a symmetric matrix by definition. This concludes the proof
\end{proof}

Consider the CONV layer with input feature map of dimension $\mathbf{a}_{l-1}^i \in \mathbb{R}^{C_{l-1} \times H_{l-1} \times W_{l-1}}$, the output feature map $\mathbf{z}_l^i \in \mathbb{R}^{m \times H_l \times W_l}$ and convolutional kernel $\mathbf{W}_l \in \mathbb{R}^{m \times C_{l-1}\times K_1 \times K_2}$, we then have the following Lemma due to~\cite{wu2020dissecting}.

 We now state the two of our results that relate the layer-wise top eigenvalues to the activation norm of each layer. Consider a FC layer as in Eq.~\ref{forward_affine} with $\mathbf{a}_{l-1}^i \in \mathbb{R}^{d_{l-1}}$ and weights $\mathbf{W}_l \in \mathbb{R}^{d_{l}\times d_{l-1}}$. We then have the following result. 
\begin{theorem}
If ${\lVert \theta \rVert}_2 \leq \tilde{B}$ then the top eigenvalue of layer-wise Hessian for the loss $\mathcal{L}$ w.r.t to $ \textbf{W}_l$ denoted by $\lambda_{max}(\mathbf{H}_{\textbf{W}_{l}}{(\mathcal{L})}) $ for l = $2$ to $L$,  computed over the batch of samples for a $L$ layered fully connected neural network for multi-class classification is given by $\lambda_{max}({\mathbf{H}_{\textbf{W}_{l}}{(\mathcal{L})})   \leq  \alpha_l \sum_{i \in B} {\lVert \mathbf{a}_{l-1}^{i} \rVert}_2^2}$ where $\alpha_l>0$.
\label{hess_theorem1}
\end{theorem}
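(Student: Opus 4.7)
}
The plan is to start from the exact formula for the per-sample layerwise Hessian provided by Lemma~\ref{fc_lemma}, namely $\mathbf{H}_{\mathbf{W}_l}(\mathcal{L}^i) = \mathbf{M}_l(\mathbf{x}^i,\theta) \otimes \mathbf{a}_{l-1}^i (\mathbf{a}_{l-1}^i)^{\top}$. Since $\mathcal{L} = \tfrac{1}{B}\sum_{i=1}^{B} \mathcal{L}^i$, linearity of the Hessian yields $\mathbf{H}_{\mathbf{W}_l}(\mathcal{L}) = \tfrac{1}{B}\sum_{i=1}^{B} \mathbf{M}_l(\mathbf{x}^i,\theta) \otimes \mathbf{a}_{l-1}^i (\mathbf{a}_{l-1}^i)^{\top}$. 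I would then apply the subadditivity inequality $\lambda_{\max}(\mathbf{S}_1+\mathbf{S}_2) \le \lambda_{\max}(\mathbf{S}_1)+\lambda_{\max}(\mathbf{S}_2)$ (preliminary~\eqref{pl5}, each summand is symmetric) to reduce the problem to a per-sample bound on $\lambda_{\max}(\mathbf{M}_l(\mathbf{x}^i,\theta) \otimes \mathbf{a}_{l-1}^i (\mathbf{a}_{l-1}^i)^{\top})$.

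For each sample I would use the Kronecker spectral identity from the preliminaries: $\operatorname{eig}(A\otimes B) = \operatorname{eig}(A)\otimes \operatorname{eig}(B)$. The rank-one PSD matrix $\mathbf{a}_{l-1}^i (\mathbf{a}_{l-1}^i)^{\top}$ has a single nonzero eigenvalue equal to $\lVert \mathbf{a}_{l-1}^i\rVert_2^2$ (preliminary~\eqref{out_eig_eq}), and the remaining eigenvalues are zero. Hence
\begin{equation*}
\lambda_{\max}\!\left(\mathbf{M}_l(\mathbf{x}^i,\theta) \otimes \mathbf{a}_{l-1}^i (\mathbf{a}_{l-1}^i)^{\top}\right) \le \lambda_{\max}\!\left(\mathbf{M}_l(\mathbf{x}^i,\theta)\right)_{+} \cdot \lVert \mathbf{a}_{l-1}^i\rVert_2^2,
\end{equation*}
where $(\cdot)_+$ denotes the positive part (the inequality is vacuous when the leading eigenvalue is non-positive).

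The main technical step is then to show that $\lambda_{\max}(\mathbf{M}_l(\mathbf{x}^i,\theta))$ is bounded by a finite constant $c_l$ whenever $\lVert \theta\rVert_2 \le \tilde{B}$. I would do this by backward induction on the layer index. At the output layer $L$, $\mathbf{M}_L = \operatorname{diag}(\hat{\mathbf{y}}^i) - \hat{\mathbf{y}}^i(\hat{\mathbf{y}}^i)^{\top}$ since $\mathcal{L}^i$ is cross-entropy of a softmax; this matrix is PSD with $\lambda_{\max}\le 1$. For $l<L$, the Hessian chain rule (as used in Lemma~\ref{fc_lemma}) gives $\mathbf{M}_l = J_l^{\top} \mathbf{M}_{l+1} J_l$ plus a term proportional to $\sigma''$; under piecewise-linear activations this extra term vanishes, and for smooth $\sigma$ it is bounded whenever the gradient of the loss at layer $l+1$ and $\sigma''$ are bounded. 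The Jacobian $J_l = \mathbf{W}_{l+1}\operatorname{diag}(\sigma'(\mathbf{z}_l^i))$ satisfies $\lVert J_l\rVert_2 \le \lVert \mathbf{W}_{l+1}\rVert_2 \le \lVert \theta\rVert_2 \le \tilde{B}$, so $\lambda_{\max}(\mathbf{M}_l) \le \tilde{B}^{2}\,\lambda_{\max}(\mathbf{M}_{l+1})$, which unrolls to $\lambda_{\max}(\mathbf{M}_l) \le \tilde{B}^{\,2(L-l)} \eqcolon c_l$.

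Combining the three ingredients yields $\lambda_{\max}(\mathbf{H}_{\mathbf{W}_l}(\mathcal{L})) \le \tfrac{c_l}{B}\sum_{i=1}^{B} \lVert \mathbf{a}_{l-1}^i\rVert_2^2$, which is exactly the claimed bound with $\alpha_l = c_l/B > 0$. I expect the principal obstacle to be the bounding of $\lambda_{\max}(\mathbf{M}_l)$, since it requires unwinding the Hessian chain rule through all deeper layers and controlling the second-derivative terms coming from the non-linearity; restricting attention to ReLU (as the excerpt does in its preamble) makes those terms vanish and keeps the induction clean, while for smooth activations one must additionally bound backpropagated gradients, which again follows from $\lVert \theta\rVert_2 \le \tilde{B}$ but introduces extra bookkeeping.
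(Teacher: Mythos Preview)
Your overall skeleton coincides with the paper's proof: linearity of the Hessian over the batch, subadditivity of $\lambda_{\max}$ (Eq.~\eqref{pl5}), Lemma~\ref{fc_lemma}, and the Kronecker eigenvalue identity together with $\operatorname{eig}(\mathbf{a}\mathbf{a}^{\top})=\lVert\mathbf{a}\rVert_2^2$. Up to this point the two arguments are identical; your use of the positive part $(\cdot)_+$ is in fact a slight refinement of what the paper writes.

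The genuine difference is in how $\lambda_{\max}(\mathbf{M}_l(\mathbf{x}^i,\theta))$ is shown to be finite. The paper does \emph{not} run a backward induction through the network; instead it argues abstractly that the entries of $\mathbf{M}_l$ are continuous in $\theta$, eigenvalues are continuous in the matrix entries, and the constraint set $\{\theta:\lVert\theta\rVert_2\le\tilde{B}\}$ is compact, so $\sup_\theta\lambda_{\max}(\mathbf{M}_l(\mathbf{x}^i,\theta))=:\alpha_l^i<\infty$ is attained; positivity of $\alpha_l^i$ is then argued by contradiction (a non-positive top eigenvalue would make the loss concave in $\mathbf{W}_l$), and finally $\alpha_l:=\max_i\alpha_l^i$. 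Your route, by contrast, is constructive: you identify $\mathbf{M}_L=\operatorname{diag}(\hat{\mathbf{y}})-\hat{\mathbf{y}}\hat{\mathbf{y}}^{\top}$ with $\lambda_{\max}\le 1$ and propagate the bound backward via $\mathbf{M}_l=J_l^{\top}\mathbf{M}_{l+1}J_l$ (the $\sigma''$ term vanishing under ReLU), obtaining the explicit constant $c_l=\tilde{B}^{\,2(L-l)}$. Both are valid. The paper's compactness argument is shorter and activation-agnostic but yields a non-explicit $\alpha_l$; your recursion is longer and requires controlling the second-derivative term of the non-linearity, but it delivers a concrete constant and makes the dependence on depth and on $\tilde{B}$ transparent.
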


\begin{proof}
We use the results from the previous Lemma's and the fact that Hessian for the batch is the average of Hessian of all the individual samples.

\begin{equation}
\mathcal{L} = \frac{1}{B} \sum_{i \in B} \mathcal{L}^i
\label{batch_loss}
\end{equation}

\begin{equation}
\mathbf{H}_{\textbf{W}_l}({\mathcal{L}}) = \frac{1}{B} \sum_{i \in B} \mathbf{H}_{\textbf{W}_l}({\mathcal{L}^i}) 
\label{hessian_W_l_batch}
\end{equation}

By repeated application of Eq.~\ref{pl5} to Eq.~\ref{hessian_W_l_batch}, we have the following.

\begin{equation}
\lambda_{max}(\mathbf{H}_{\textbf{W}_l}({\mathcal{L}})) \leq
{1 \over B} \sum_{i \in B} \lambda_{max}(\mathbf{H}_{\textbf{W}_l}({\mathcal{L}^i}))
\label{sup:simphess1}
\end{equation}

From the Lemma~\ref{fc_lemma} we have the following
\begin{equation}
H_{\mathbf{W}_{l}}(\mathcal{L}^i) = \mathbf{M}_l(x^i,\theta) \otimes \mathbf{a}_{l-1}^i {\mathbf{a}^i}^{\top}_{l-1}
\label{sup:lemma1_eq}
\end{equation}

By using Eq.~\ref{pl4} in the above Eq.~\ref{sup:lemma1_eq} we get the following. 

\begin{equation}
\lambda_{max}(\mathbf{H}_{\textbf{W}_l}({\mathcal{L}^i})) = \lambda_{max}(\mathbf{M}_l(x^i,\theta)) \lambda_{max}(\mathbf{a}^i_{l-1} {\mathbf{a}^i}^{\top}_{l-1})
\label{sup:sample_eig}
\end{equation}

We now show that the $\lambda_{max}(\mathbf{M}_l(\mathbf{x}^i,\theta))$
exists and its finite in the following arguments.

From~\cite{ams_article}, we know that the eigenvalues are the continuous functions of the coefficients of characteristic polynomials, and so is the top eigenvalue. 

Since every entry in the matrix $\mathbf{M}_l(\mathbf{x}^i,\theta)$ is a continuous function of $\theta$. The coefficients of the characteristic polynomials are also continuous functions of $\theta$. Since the continuity is preserved under the composition of continuous functions, i.e., if $f$ is continuous and $g$ is continuous, then the composition $fog$ is continuous.

If top eigenvalue $\lambda_{max}$ is a continuous function of the coefficients of characteristic polynomial and the coefficients are again a continuous function of the variable $\theta$. We conclude that $\lambda_{max}$ is a continuous function $\theta$. \\
The set $\{ \theta \colon {\lVert \theta \rVert}_2 \leq \tilde{B}\}$ where $\theta \in \mathbb{R}^d$ is compact. Continuous function map compact sets to compact sets. Thus the function $\lambda_{max}$ attains its supremum and its finite.
\begin{equation}
\lambda_{max}(\mathbf{M}_l(\mathbf{x}^i,\theta)) \leq \underset{\theta} \sup (\lambda_{max}(\mathbf{M}_l(\mathbf{x}^i,\theta))) = \alpha_l^{i}
\label{sup:eig_bound}
\end{equation}

We note that $\alpha_l^{i} > 0$, suppose if its negative, by~\ref{sup:sample_eig} we see that top eigenvalue of layerwise Hessian is negative; this implies the loss function is concave, which contradicts the fact that neural-networks are non-convex and non-concave functions.

By using the bound in Eq.~\ref{sup:eig_bound} in Eq.~\ref{sup:sample_eig}, we can bound the Eq.~\ref{sup:simphess1} as below.

\begin{equation}
\lambda_{max}(\mathbf{H}_{\textbf{W}_l}({\mathcal{L}})) \leq
{1 \over B} \sum_{i \in B}  \alpha_l^i {\lVert \mathbf{a}^i_{l-1} \rVert}_{F}^2 
\label{final_hessian3}
\end{equation}
we have the following, where  $\alpha_l$ is the maximum over all the training samples $\alpha_l^{i}$.
\begin{equation}
\alpha_l^{i} \leq \underset{i} \max (\alpha_l^{i}) = \alpha_l
\label{sup:alpha_bound}
\end{equation}

Using Eq.~\ref{sup:alpha_bound} in the Eq.~\ref{final_hessian3} we get the following

\begin{equation}
\lambda_{max}(\mathbf{H}_{\textbf{W}_l}({\mathcal{L}})) \leq
{1 \over B} \alpha_l \sum_{i \in B} {\lVert \mathbf{a}^i_{l-1} \rVert}_{2}^2 
\label{final_eigen_value}
\end{equation}
This completes the proof.
\end{proof}

\begin{lemma}
For the Network described in Eq.~\ref{forward_affine} to Eq.~\ref{sup:ce_loss}. The Hessian of loss $\mathcal{L}^i$ with respect to weights of CONV layer  $\mathbf{W}_{l}$ denoted by $H_{\mathbf{W}_{l}}(\mathcal{L}^i)$ is approximated by $H_{\mathbf{W}_{l}}(\mathcal{L}^i) \approx \tilde{\mathbf{M}}_l(x^i) \otimes \mathbf{a}_{l-1}^i {\mathbf{a}^i}^{\top}_{l-1}$. 
\label{conv_lemma}
\end{lemma}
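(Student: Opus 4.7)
The plan is to mimic the derivation of Lemma~\ref{fc_lemma} after reformulating the convolution as a dense matrix product via the im2col (unfolding) operator. Specifically, I would first unfold the input feature map $\mathbf{a}_{l-1}^i \in \mathbb{R}^{C_{l-1}\times H_{l-1}\times W_{l-1}}$ into a patch matrix $\tilde{\mathbf{a}}_{l-1}^i \in \mathbb{R}^{(C_{l-1}K_1K_2)\times (H_l W_l)}$ whose columns are the $K_1 \times K_2$ receptive fields at each output spatial location, and reshape the kernel $\mathbf{W}_l$ into $\tilde{\mathbf{W}}_l \in \mathbb{R}^{m \times (C_{l-1}K_1K_2)}$. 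With this, the convolution becomes the affine map $\tilde{\mathbf{z}}_l^i = \tilde{\mathbf{W}}_l \tilde{\mathbf{a}}_{l-1}^i$, which has exactly the structure of the FC layer in Eq.~\ref{forward_affine}.

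Next, applying the Hessian chain rule precisely as in Lemma~\ref{fc_lemma}, and noting that a convolution is linear in its weights so that $\nabla^2_{\mathbf{w}_l} \mathbf{z}^i[n] = 0$, I would obtain
\begin{equation*}
H_{\tilde{\mathbf{w}}_l}(\mathcal{L}^i) \;=\; \bigl(\mathbf{I}_{m}\otimes \tilde{\mathbf{a}}_{l-1}^i\bigr)\, H_{\tilde{\mathbf{z}}_l^i}(\mathcal{L}^i)\,\bigl(\mathbf{I}_{m}\otimes (\tilde{\mathbf{a}}_{l-1}^i)^\top\bigr),
\end{equation*}
which, after invoking the Kronecker identities in Eq.~\ref{pl2}--\ref{pl3}, collapses into the clean factorization $H_{\tilde{\mathbf{W}}_l}(\mathcal{L}^i) = \tilde{\mathbf{M}}_l(x^i) \otimes \tilde{\mathbf{a}}_{l-1}^i (\tilde{\mathbf{a}}_{l-1}^i)^\top$, where $\tilde{\mathbf{M}}_l(x^i) \coloneqq H_{\tilde{\mathbf{z}}_l^i}(\mathcal{L}^i)$ is symmetric by construction.

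The ``$\approx$'' in the statement then comes from identifying the patch-Gram matrix $\tilde{\mathbf{a}}_{l-1}^i (\tilde{\mathbf{a}}_{l-1}^i)^\top$ with an outer product constructed from the raw activation tensor $\mathbf{a}_{l-1}^i$, in the spirit of the KFAC-style approximation used in the curvature literature. The main obstacle I anticipate is making this identification rigorous, because the patch matrix has overlapping columns (a single activation entry appears in several receptive fields), so the equality only holds up to multiplicities and cross-patch correlations. I would sidestep this by observing that for the purposes of Theorem~\ref{hess_theorem2} we only need a bound on the top eigenvalue, and by Eq.~\ref{out_eig_eq} this eigenvalue is controlled by $\lVert \tilde{\mathbf{a}}_{l-1}^i \rVert_F^2$, which is at most $K_1 K_2 \lVert \mathbf{a}_{l-1}^i \rVert_F^2$. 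Absorbing the constant $K_1 K_2$ into the layer-dependent scalar $\alpha_l$ therefore preserves the conclusion, so the lemma is used only as a qualitative Kronecker-structured approximation that becomes exact in norm after this constant is accounted for.
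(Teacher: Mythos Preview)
The paper does not give a self-contained proof of this lemma; it simply refers the reader to Appendix~A.2 of~\cite{wu2020dissecting}. Your sketch is therefore already more detailed than the paper's own treatment, and it follows exactly the line of argument in that reference: rewrite the convolution as a matrix product via im2col, apply the Hessian chain rule from Lemma~\ref{fc_lemma}, and then invoke a KFAC-type approximation to obtain the Kronecker factorization.

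One technical remark: the step where you say the expression ``collapses into the clean factorization'' $\tilde{\mathbf{M}}_l(x^i)\otimes\tilde{\mathbf{a}}_{l-1}^i(\tilde{\mathbf{a}}_{l-1}^i)^\top$ via Eq.~\ref{pl2}--\ref{pl3} is itself already an approximation, not an identity. In the FC case of Lemma~\ref{fc_lemma} this works exactly because $\mathbf{a}_{l-1}^i$ is a vector, so $H_{\mathbf{z}_l^i}(\mathcal{L}^i)=H_{\mathbf{z}_l^i}(\mathcal{L}^i)\otimes 1$ slots into the mixed-product rule. In the convolutional case $\tilde{\mathbf{a}}_{l-1}^i$ has $H_lW_l$ columns, and $(\mathbf{I}_m\otimes\tilde{\mathbf{a}}_{l-1}^i)\,H_{\tilde{\mathbf{z}}_l^i}(\mathcal{L}^i)\,(\mathbf{I}_m\otimes(\tilde{\mathbf{a}}_{l-1}^i)^\top)$ does not reduce to a single Kronecker product unless $H_{\tilde{\mathbf{z}}_l^i}(\mathcal{L}^i)$ is itself assumed to factor across output channels and spatial locations---that assumption \emph{is} the KFAC-style approximation. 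So the ``$\approx$'' enters at this step, not only at the later identification of the patch-Gram matrix with the raw-activation outer product. Your subsequent observation that Theorem~\ref{hess_theorem2} only needs a spectral bound, and that $\lVert\tilde{\mathbf{a}}_{l-1}^i\rVert_F^2\le K_1K_2\,\lVert\mathbf{a}_{l-1}^i\rVert_F^2$ with the constant absorbed into $\alpha_l$, is correct and is exactly how the lemma is consumed downstream in the paper.
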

\begin{proof}
For a detailed discussion, please refer to Appendix A.2 of~\cite{wu2020dissecting}.
\end{proof}

\begin{theorem} 
If ${\lVert \theta \rVert}_2 \leq \tilde{B}$, the top eigenvalue of layer-wise Hessians for the loss (w.r.t to $ \textbf{W}_l$ for l = $2$ to $L$ ) computed over the Batch of samples for a $L$ layered convolutional neural network for multi-class classification is given by $\lambda_{max}({\mathbf{H}_{\textbf{W}_{l}}({\mathcal{L}}))  \leq  \alpha_l \sum_{i \in B} {\lVert \textbf{a}_{l-1}^{i} \rVert}_F^2}$ where where $\alpha_l>0$.
\label{hess_theorem2}
\end{theorem}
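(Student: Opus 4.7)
The plan is to imitate the argument used for Theorem~\ref{hess_theorem1}, with the outer product $\mathbf{a}_{l-1}^i{\mathbf{a}_{l-1}^i}^\top$ now interpreted on the vectorized feature map. First I would reduce the batch Hessian to per-sample Hessians via $\mathbf{H}_{\mathbf{W}_l}(\mathcal{L}) = \tfrac{1}{B}\sum_{i\in B}\mathbf{H}_{\mathbf{W}_l}(\mathcal{L}^i)$ and apply the subadditivity of $\lambda_{max}$ on symmetric matrices (identity~\ref{pl5}) to push the top eigenvalue inside the sum, so that it suffices to bound $\lambda_{max}(\mathbf{H}_{\mathbf{W}_l}(\mathcal{L}^i))$ for every $i$.

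Next, I would invoke Lemma~\ref{conv_lemma} to write $\mathbf{H}_{\mathbf{W}_l}(\mathcal{L}^i) \approx \tilde{\mathbf{M}}_l(\mathbf{x}^i)\otimes \mathbf{a}_{l-1}^i{\mathbf{a}_{l-1}^i}^\top$, where $\mathbf{a}_{l-1}^i$ is treated as $\mathrm{vec}(\mathbf{a}_{l-1}^i)\in\mathbb{R}^{C_{l-1}H_{l-1}W_{l-1}}$. The Kronecker eigenvalue identity~\ref{pl4} then factors the per-sample top eigenvalue as $\lambda_{max}(\tilde{\mathbf{M}}_l(\mathbf{x}^i))\,\lambda_{max}(\mathbf{a}_{l-1}^i{\mathbf{a}_{l-1}^i}^\top)$. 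The rank-one factor satisfies $\lambda_{max}(\mathbf{a}_{l-1}^i{\mathbf{a}_{l-1}^i}^\top) = {\lVert \mathrm{vec}(\mathbf{a}_{l-1}^i)\rVert}_2^2 = {\lVert \mathbf{a}_{l-1}^i\rVert}_F^2$ by identity~\ref{out_eig_eq} together with the fact that the Euclidean norm of the flattened tensor equals the Frobenius norm of the original feature map.

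Then I would reproduce the compactness argument from the proof of Theorem~\ref{hess_theorem1} applied to $\tilde{\mathbf{M}}_l(\mathbf{x}^i)$. Its entries are continuous in $\theta$, hence $\lambda_{max}(\tilde{\mathbf{M}}_l(\mathbf{x}^i))$ is a continuous function of $\theta$ on the compact ball $\{\theta:{\lVert\theta\rVert}_2\leq\tilde{B}\}$, and therefore attains a finite positive supremum $\alpha_l^i$. Setting $\alpha_l := \max_{i\in B}\alpha_l^i$ and assembling the pieces gives $\lambda_{max}(\mathbf{H}_{\mathbf{W}_l}(\mathcal{L})) \leq \alpha_l\sum_{i\in B}{\lVert\mathbf{a}_{l-1}^i\rVert}_F^2$ (absorbing the $1/B$ into $\alpha_l$), as claimed.

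The main obstacle I anticipate is the approximation sign in Lemma~\ref{conv_lemma}: unlike the FC case the Kronecker decomposition is not exact, because of weight sharing the pre-activation is not affine in $\mathbf{W}_l$ in the same simple way, and Lemma~\ref{conv_lemma} drops a residual term from the chain rule. I would need to invoke the justification from the cited reference that this residual is dominated (in spectral norm) by the retained outer-product term, so the bound on $\lambda_{max}$ is preserved. A secondary bookkeeping point is fixing the convention for vectorizing $\mathbf{a}_{l-1}^i$ so that the outer product in Lemma~\ref{conv_lemma} is genuinely rank one on $\mathbb{R}^{C_{l-1}H_{l-1}W_{l-1}}$; once that convention is pinned down, identity~\ref{out_eig_eq} immediately yields the Frobenius norm.
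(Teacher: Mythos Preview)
Your overall strategy matches the paper's proof: batch decomposition, subadditivity of $\lambda_{max}$, Lemma~\ref{conv_lemma}, the Kronecker eigenvalue identity, a compactness bound on $\lambda_{max}(\tilde{\mathbf{M}}_l(\mathbf{x}^i))$, and a max over the batch. The one substantive divergence is exactly the step you flagged as ``bookkeeping''. You assume $\mathbf{a}_{l-1}^i$ can be vectorized so that $\mathbf{a}_{l-1}^i{\mathbf{a}_{l-1}^i}^\top$ is rank one and identity~\eqref{out_eig_eq} applies. The paper explicitly says this is \emph{not} available: in the convolutional decomposition of Lemma~\ref{conv_lemma} (inherited from the cited reference), $\mathbf{a}_{l-1}^i$ is a genuine matrix, so the outer product is not rank one. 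Instead the paper uses the PSD inequality $\lambda_{max}(\mathbf{a}_{l-1}^i{\mathbf{a}_{l-1}^i}^\top)\leq \mathrm{Trace}(\mathbf{a}_{l-1}^i{\mathbf{a}_{l-1}^i}^\top)={\lVert \mathbf{a}_{l-1}^i\rVert}_F^2$ to reach the Frobenius norm. So your ``secondary'' point is in fact the place where the convolutional proof departs from the FC proof; the fix is the trace bound, not a choice of vectorization.

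On your main concern (the $\approx$ in Lemma~\ref{conv_lemma}): the paper's proof simply treats it as an equality and defers any justification to the cited source, so you are being more careful than the paper here.
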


\begin{proof}
In the proof technique, we follow the exact similar steps as the above theorem with some minor changes. The major change here is we now use the convolutional layers. 
\begin{equation}
\mathcal{L} = {1 \over B} \sum_{i \in B} \mathcal{L}^i
\label{th2:batch_loss}
\end{equation}

\begin{equation}
\mathbf{H}_{\textbf{W}_l}({\mathcal{L}}) = {1 \over B} \sum_{i \in B} \mathbf{H}_{\textbf{W}_l}({\mathcal{L}^i}) 
\label{th2:hessian_W_l_batch}
\end{equation}

By repeated application of Eq.~\ref{pl5} to the Eq.~\ref{th2:hessian_W_l_batch} we have the following.

\begin{equation}
\lambda_{max}(\mathbf{H}_{\textbf{W}_l}({\mathcal{L}})) \leq
{1 \over B} \sum_{i \in B} \lambda_{max}(\mathbf{H}_{\textbf{W}_l}({\mathcal{L}^i}))
\label{th2:simphess1}
\end{equation}

From the Lemma~\ref{conv_lemma} we have the following
\begin{equation}
H_{\mathbf{W}_{l}}(\mathcal{L}^i) = \tilde{\mathbf{M}}_l(x^i) \otimes \mathbf{a}_{l-1}^i {\mathbf{a}^i}^{\top}_{l-1}
\label{th2:lemma1_eq}
\end{equation}

By using Eq.~\ref{pl4}  in the above Eq.~\ref{th2:lemma1_eq} we get the following. 

\begin{equation}
\lambda_{max}(\mathbf{H}_{\textbf{W}_l}({\mathcal{L}^i})) = \lambda_{max}(\tilde{\mathbf{M}}_l(x^i)) \lambda_{max}(\mathbf{a}^i_{l-1} {\mathbf{a}^i}^{\top}_{l-1})
\label{th2:sample_eig}
\end{equation}

We cannot use Eq.~\ref{out_eig_eq} directly to find the eigenvalue of $\mathbf{a}^i_{l-1} {\mathbf{a}^i}^{\top}_{l-1}$ as $\mathbf{a}^i_{l-1}$ is matrix not a vector, because we are dealing with convolutional layers.
Since $\mathbf{a}^i_{l-1} {\mathbf{a}^i}^{\top}_{l-1}$ is a positive semi-deifinite matrix we have the following inequality

\begin{equation}
 \lambda_{max}(\mathbf{a}^i_{l-1} {\mathbf{a}^i}^{\top}_{l-1}) \leq \text{Trace}(\mathbf{a}^i_{l-1} {\mathbf{a}^i}^{\top}_{l-1})
 \label{trace_eig_ineq}
\end{equation}

By using the identity $\text{Trace}(\mathbf{a}^i_{l-1} {\mathbf{a}^i}^{\top}_{l-1}) = {\lVert \mathbf{a}^i_{l-1} \rVert}_{F}^2 $ in the Eq.~\ref{th2:sample_eig} we get the following 

\begin{equation}
\lambda_{max}(\mathbf{H}_{\textbf{W}_l}({\mathcal{L}^i})) \leq \lambda_{max}(\tilde{\mathbf{M}}_l(x^i)) {\lVert \mathbf{a}^i_{l-1} \rVert}_{F}^2
\label{th2:sample_eig_ineq}
\end{equation}

We can use similar reasoning as in Theorem~\ref{hess_theorem1} to bound the value of the $\lambda_{max}(\tilde{\mathbf{M}}_l(x^i))$.

By substituting the above inequality~\ref{th2:sample_eig_ineq} in~\ref{th2:simphess1} we get the following.

\begin{equation}
\lambda_{max}(\mathbf{H}_{\textbf{W}_l}({\mathcal{L}})) \leq
{1 \over B} \sum_{i \in B}  \alpha_l^i {\lVert \mathbf{a}^i_{l-1} \rVert}_{F}^2 
\label{th2:final_hessian2}
\end{equation}
where we have used the fact $ \lambda_{max}(\tilde{\mathbf{M}}_l(x^i)) \leq \alpha_l^i$.

If we denote $\alpha_l$ as the maximum overall $\alpha_l^i$ over the batch.
We then get the following

\begin{equation}
\lambda_{max}(\mathbf{H}_{\textbf{W}_l}({\mathcal{L}})) \leq
{1 \over B} \alpha_l \sum_{i \in B} {\lVert \mathbf{a}^i_{l-1} \rVert}_{F}^2 
\label{th2:final_eigen_value}
\end{equation}
This completes the proof.
\end{proof}


\section{Model Architectures}
In Table~\ref{model_table}, the model architecture is shown. We use PyTorch style representation. For example convolutional (CONV) layer($3$,$64$,$5$) means $3$ input channels, $64$ output channels and the kernel size is $5$. Maxpool($2$,$2$) represents the kernel size of $2$ and a stride of $2$. Fully Connected (FC)(384,200) represents an input dimension of $384$ and an output dimension of $200$. The architecture for CIFAR-100 is exactly the same as used in~\cite{acar2021federated}. 
\begin{table}[htp]
\caption{Models used for Tiny-ImageNet and CIFAR-100 datasets.}
\centering
\begin{tabular}{c|c}
\hline
\multicolumn{1}{l|}{\multirow{6}{*}{\textbf{CIFAR-100 Model}}} & \textbf{Tiny-ImageNet Model} \\ \cline{2-2} 
\multicolumn{1}{l|}{}                                              & ConvLayer(3,64,3)            \\ \cline{2-2} 
\multicolumn{1}{l|}{}                                              & GroupNorm(4,64)               \\ \cline{2-2} 
\multicolumn{1}{l|}{}                                              & Relu                          \\ \cline{2-2} 
\multicolumn{1}{l|}{}                                              & MaxPool(2,2)                  \\ \cline{2-2} 
\multicolumn{1}{l|}{}                                              & ConvLayer(64,64,3)           \\ \hline
\multicolumn{1}{l|}{}                                              & GroupNorm(4,64)               \\ \hline
ConvLayer(3,64,5)                                                  & Relu                          \\ \hline
Relu                                                                & MaxPool(2,2)                  \\ \hline
MaxPool(2,2)                                                        & ConvLayer(64,64,3)           \\ \hline
ConvLayer(64,64,5)                                                 & GroupNorm(4,64)               \\ \hline
Relu                                                                & Relu                          \\ \hline
MaxPool(2,2)                                                        & MaxPool(2,2)                  \\ \hline
Flatten                                                             & Flatten                       \\ \hline
FullyConnected(1600,384)                                           & FullyConnected(4096,512)     \\ \hline
Relu                                                                & Relu                          \\ \hline
FullyConnected(384,192)                                            & FullyConnected(512,384)      \\ \hline
Relu                                                                & Relu                          \\ \hline
FullyConnected(192,100)                                                     & FullyConnected(384,200)               \\ \hline
\end{tabular}
\label{model_table}
\end{table}
\vspace{-0.1in}
\section{Sensitivity to hyper-parameter $\zeta$}
\begin{figure}[htp]
    \centering
    \includegraphics[scale = 0.6]{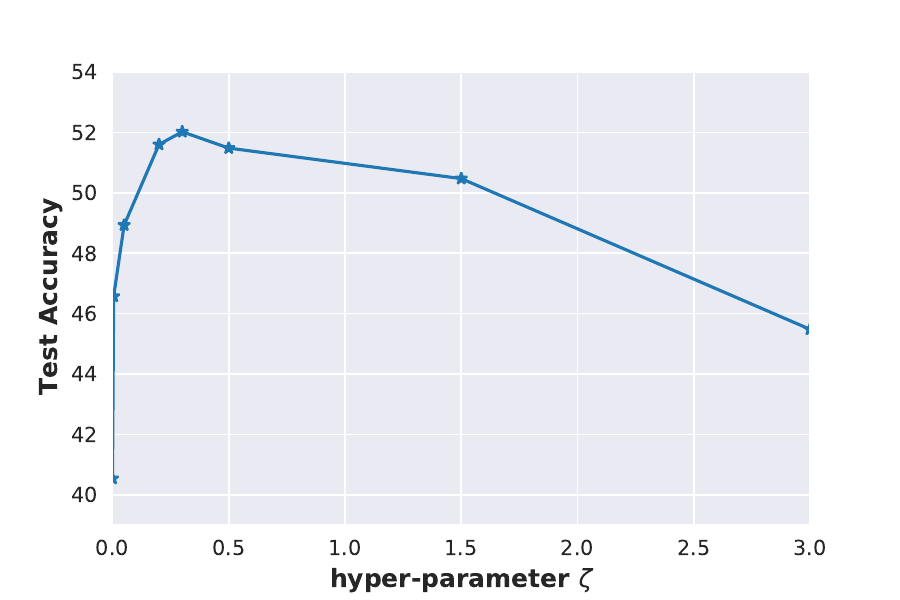}
    \caption{Sensitivity of Accuracy to the hyper-parameter $\zeta$. It can be seen that accuracy is stable over $\zeta \in \{ 0.1, 3.0\}$ }
    \label{fig:sens_hyper_parameter}
\end{figure}
In figure~\ref{fig:sens_hyper_parameter}, we perform sensitivity analysis on the hyper-parameter $\zeta$ i.e. how model accuracy varies over different values of $\zeta$. We consider the algorithm FedAvg+ MAN on CIFAR-100 dataset with Dirichlet-based non-iid data partition ($\delta = 0.3$). We observe that accuracy is stable over $\zeta \in \{0.1, 1.5\}$.

\section{Additional Results}
\subsection{Results for FedSAM/ASAM and FedSpeed with and without MAN}
\begin{figure*}[htp]
  \centering
  \subfloat[$\delta = 0.3$]{
  \includegraphics[scale=0.39]{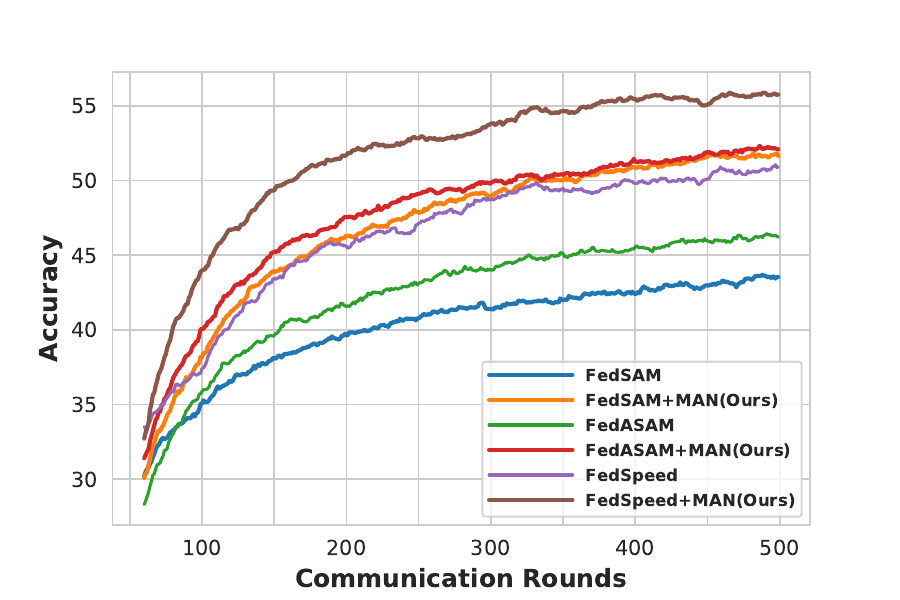}}\hspace{-2.em}
  \subfloat[$\delta = 0.6$]{
  \includegraphics[scale=0.39]{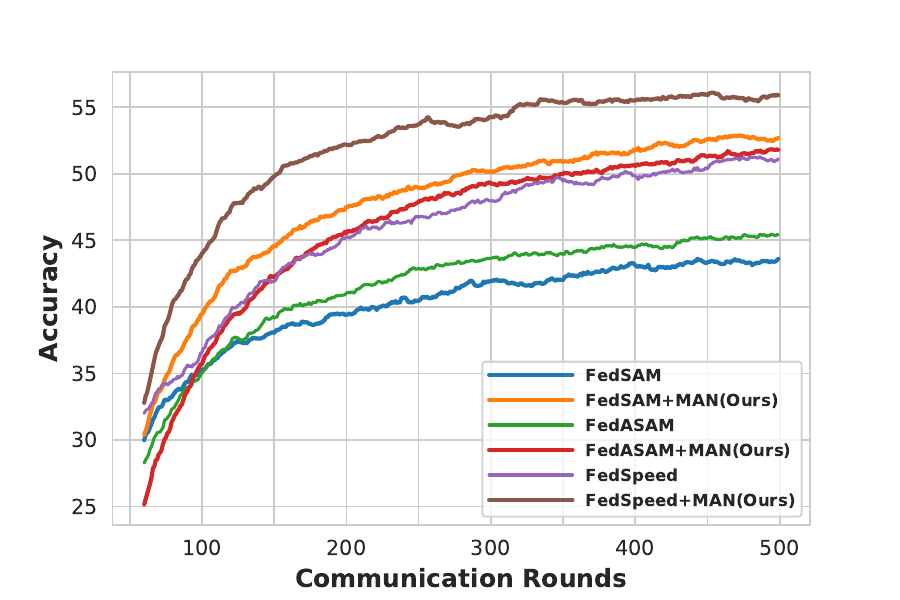}}\hspace{-2.em}
  \subfloat[iid]{
  \includegraphics[scale=0.39]{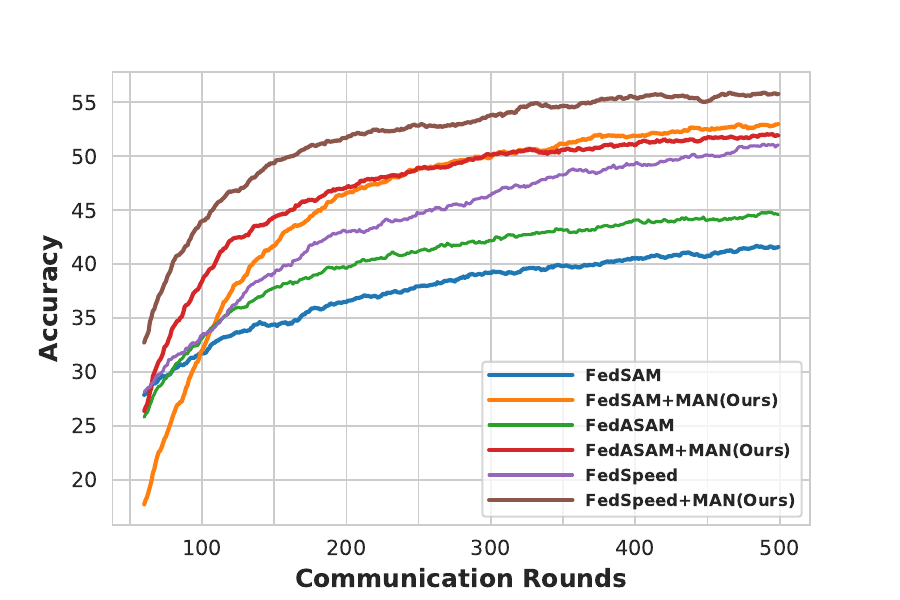}
  }
  \caption{Convergence Comparison for CIFAR-100: We compare performance of the algorithms FedAvg, FedDyn, FedDC and the proposed FedAvg+MAN, FedDyn+MAN and FedDC+MAN for 500 communication rounds. It can be clearly seen that  proposed approach significantly improves the existing algorithms across the communication rounds. }
  \label{set2_cifar100_perf}
\end{figure*}
\begin{figure*}[htp]
  \centering
  \subfloat[$\delta = 0.3$]{
  \includegraphics[scale=0.39]{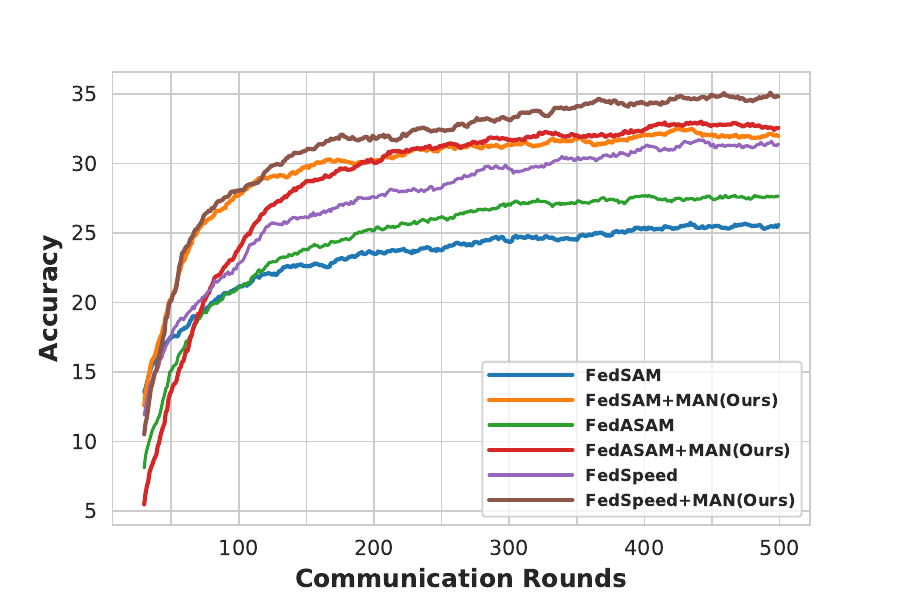}} \hspace{-2.em}
  \subfloat[$\delta = 0.6$]{
  \includegraphics[scale=0.39]{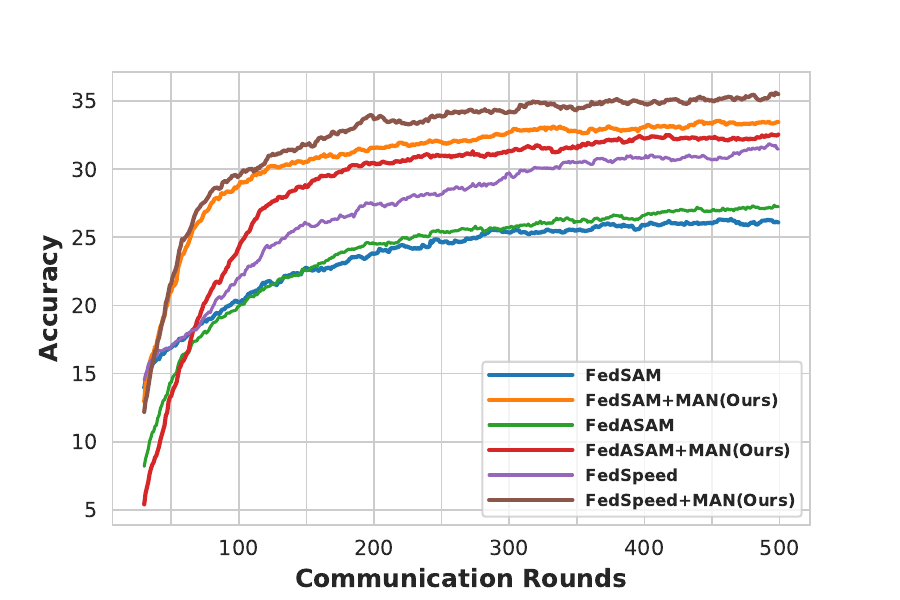}}
  \hspace{-2.em}
  \subfloat[iid]{
  \includegraphics[scale=0.39]{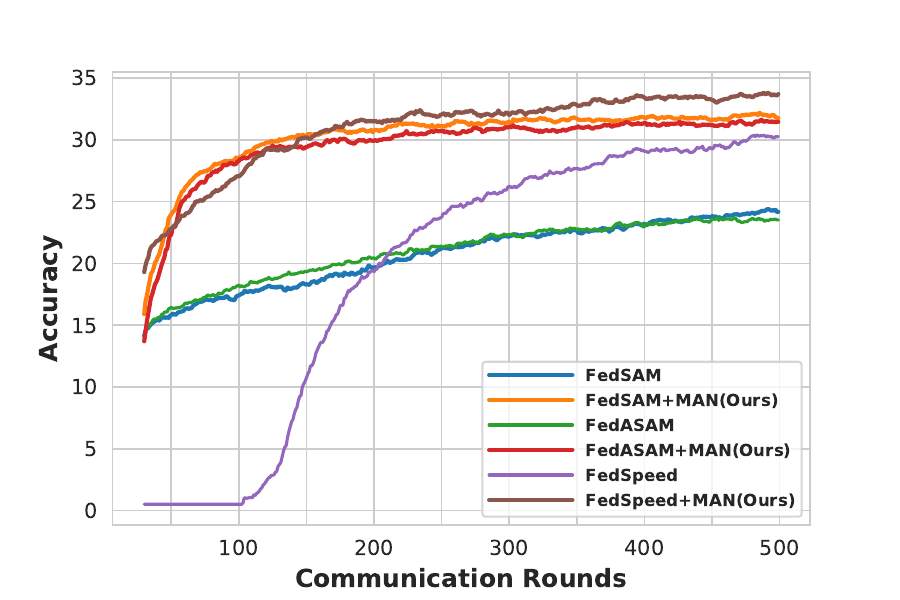}
  }
  \caption{Convergence Comparison for Tiny-ImageNet: We compare the performance of the algorithms FedAvg, FedDyn, FedDC and the proposed FedAvg+MAN, FedDyn+MAN, and FedDC+MAN for 500 communication rounds. It can be clearly seen that the proposed approach significantly improves the existing algorithms.}
  \label{set2_TinyImageNet_perf}
\end{figure*}

We have developed all our experiments based on the open source code provided by~\cite{acar2021federated}. In the figure~\ref{set2_cifar100_perf}, we provide the plots for communication rounds vs Accuracy on CIFAR-100 dataset for FedSAM, FedASAM and FedSpeed with and without using our MAN regularizer. It can be seen that MAN regularizer consistently improves the performance of all the algorithms. Similar results for Tiny-ImageNet are provided in the figure~\ref{set2_TinyImageNet_perf}. We can observe consistent improvement in the performance of the algorithms when MAN regularizer is added. 
\subsection{CIFAR-10 Results}

In this section we provide the results for CIFAR-10 dataset. In the table~\ref{cifar10_results} we report the performance of all the algorithms (FedAvg,FedDyn,FedDC, FedSAM,FedASAM and FedSpeed) with and without using the MAN. It can be clearly seen that the performance of all the algorithms can be improved when our MAN regularizer is used with the algorithms. 
\begin{table}[htp]
\caption{Comparison of various methods with and without MAN regularizer with different degrees of heterogeneity on CIFAR-10 dataset. MAN clearly improves the performance consistently across all the methods. All the experiments are repeated for three different initializations and their mean and standard deviations are reported.}
\scalebox{0.8}{
\begin{tabular}{l|lll}
\hline
\multirow{2}{*}{Method} & \multicolumn{3}{c}{CIFAR10}                                                         \\ \cline{2-4} 
                        & \multicolumn{1}{c|}{$\delta = 0.6$}        & \multicolumn{1}{c|}{$\delta = 0.3$}        & iid          \\ \toprule
FedAvg                  
& \multicolumn{1}{l}{79.34{$_{\pm{0.19}}$}}  
& \multicolumn{1}{l}{80.19 $_{\pm{0.46}}$} 
& 81.44$_{\pm{0.43}}$  
\\ 
FedAvg+MAN              
& \multicolumn{1}{l}{\textbf{82.53}$_{\pm{0.25}}$}  
& \multicolumn{1}{l}{\textbf{83.35}$_{\pm{0.09}}$}  
& \textbf{84.18} $_{\pm{0.15}}$  
\\ \midrule
FedSAM                  
& \multicolumn{1}{l}{80.42 $_{\pm{0.47}}$}  
& \multicolumn{1}{l}{81.40 $_{\pm{0.17}}$}  
& 82.54 $_{\pm{0.14}}$  
\\ 
FedSAM+MAN              
& \multicolumn{1}{l}{\textbf{81.56} $_{\pm{0.1}}$}   
& \multicolumn{1}{l}{\textbf{82.54} $_{\pm{0.16}}$}  
& \textbf{84.19} $_{\pm{0.26}}$  
\\ \midrule
FedASAM                 
& \multicolumn{1}{l}{79.90 $_{\pm{0.43}}$} 
& \multicolumn{1}{l}{80.83 $_{\pm{0.08}}$} 
& 82.27 $_{\pm{0.45}}$
\\
FedASAM+MAN             
& \multicolumn{1}{l}{\textbf{80.81} $_{\pm{0.06}}$}  
& \multicolumn{1}{l}{\textbf{81.85} $_{\pm{0.08}}$}  
& \textbf{84.22} $_{\pm{0.1}}$  

\\ \midrule
FedDyn                  
& \multicolumn{1}{l}{82.00 $_{\pm{0.22}}$} 
& \multicolumn{1}{l}{82.53 $_{\pm{0.06}}$} 
& 84.16 $_{\pm{0.41}}$ 
\\ 
FedDyn+MAN              
& \multicolumn{1}{l}{\textbf{83.84}$_{\pm{0.38}}$}  
& \multicolumn{1}{l}{\textbf{84.63}$_{\pm{0.17}}$}  
& \textbf{84.80}$_{\pm{0.25}}$  
\\ \midrule
FedDC                   
& \multicolumn{1}{l}{83.10 $_{\pm{0.37}}$} 
& \multicolumn{1}{l}{83.64 $_{\pm{0.13}}$} 
& 84.8 $_{\pm{0.21}}$  
\\
FedDC+MAN               
& \multicolumn{1}{l}{\textbf{83.27}$_{\pm{0.18}}$}  
& \multicolumn{1}{l}{\textbf{83.59} $_{\pm{0.15}}$}  
& \textbf{85.08} $_{\pm{0.1}}$   
\\ \midrule
FedSpeed                
& \multicolumn{1}{l}{84.06 $_{\pm{0.11}}$} 
& \multicolumn{1}{l}{84.24 $_{\pm{0.14}}$} 
& 85.14 $_{\pm{0.3}}$   
\\ 
FedSpeed+MAN            
& \multicolumn{1}{l}{\textbf{84.37} $_{\pm{0.25}}$}  
& \multicolumn{1}{l}{\textbf{84.82} $_{\pm{0.16}}$}  
& \textbf{85.95} $_{\pm{0.24}}$  
\\ \hline
\end{tabular}
}

\label{cifar10_results}
\end{table}

\subsection{Empirical Hessian Analysis}
\begin{table}[htp]
\centering
\caption{Comparison of top eigenvalues and trace of the algorithms with and without MAN regularizer, lower values are better. We can observe that by
augmenting MAN regularization, i.e., FedASAM+MAN, FedSpeed+MAN, we obtain lower trace and lower top eigenvalues, which is indicative of flat minimum, and hence, it attains better accuracy.}
\scalebox{0.9}{
\begin{tabular}{l|cccc}
\toprule
                         & \multicolumn{4}{c}{CIFAR-100}                                                                                                                       \\ \cline{2-5} 
                         & \multicolumn{2}{c|}{$\delta = 0.3$}                                   & \multicolumn{2}{c}{$\delta = 0.6$}                                                \\ \cline{2-5} 
\multirow{-3}{*}{Method} & \multicolumn{1}{l}{\begin{tabular}[c]{@{}c@{}}Top \\ eigenvalue\end{tabular}} & \multicolumn{1}{l|}{Trace}   & \multicolumn{1}{l}{\begin{tabular}[c]{@{}c@{}}Top \\ eigenvalue\end{tabular}}            & \multicolumn{1}{l}{Trace}     \\ \midrule
FedASAM                   
& \multicolumn{1}{c}{51.49}          
& \multicolumn{1}{c|}{8744} 
& \multicolumn{1}{c}{53.39}                     
& 9056                        
\\ 
FedASAM+MAN               
& \multicolumn{1}{c}{\textbf{43.80}}        
& \multicolumn{1}{c|}{\textbf{4397}}
& \multicolumn{1}{c}{\textbf{42.00}} 
& {\textbf{4747}} 
\\ \hline
FedSpeed                  
& \multicolumn{1}{c}{47.31}          
& \multicolumn{1}{c|}{6463} 
& \multicolumn{1}{c}{49.75}                     
& 6519                        \\ 
FedSpeed+MAN               
& \multicolumn{1}{c}{\textbf{40.41}}          
& \multicolumn{1}{c|}{\textbf{3806}} 
& \multicolumn{1}{c}{\textbf{37.64}}                     
& \textbf{3674}   
\\ \bottomrule
\end{tabular}
}
\label{sam_analysis_sup}
\end{table}

In Table~\ref{sam_analysis_sup} we present the top eigenvalue and the trace of the Hessian of the loss of global model for FedASAM, FedSpeed and their improved versions using MAN i.e, FedASAM+MAN and FedSpeed+MAN.

\section{Non-iid Data generation}
We now briefly describe how the data is generated using the Dirichlet distribution. This distribution is parameterized by parameter $\delta$. For every client, we sample a vector for the Dirichlet distribution. This vector is of the length of a total number of classes and represents the label distribution of the clients. The lower value of $\delta$ implies high heterogeneity, i.e.; label distribution is non-uniform; only a few labels dominate the samples on the client's data. 

\section{Hyper-parameter settings}
All the algorithms use a learning rate of $0.1$, batch size of $50$, client participation of $10\%$, and a gradient clipping threshold of $10$. We use $5$ local epochs for client training learning rate decay of $0.998$ for every round was used. 

For FedAvg+MAN, we use $\zeta = 0.6$ by default. For FedDyn, we use $\alpha = 0.01$. For FedDC also uses $\alpha = 0.01$. For FedSpeed, we use $\rho = 0.1$ for non-iid settings and $\rho = 0.01$ for iid setting, $\beta = 1.0$ ,$\gamma = 1.0$ and no gradient cutoff threshold to 0.05. For FedSAM we use $\rho = 0.05$. Only for Tiny-ImageNet with iid partition, we set $\rho= 0.03$. When we use MAN, i.e, FedSAM+MAN, we set $\rho=0.01$ for Tiny-ImageNet with iid partition, and weight decay of $1e-3$ is used.
For FedASAM we use $\rho = 0.5$ and $\eta = 0.2$. Only for iid partition we set the values to $\rho = 0.1$ and $\eta = 0.2$.

\section{Algorithm details of FedAvg+MAN, FedDyn+MAN and FedDC+MAN}
We present the algorithm details for implementing FedAvg+MAN, FedDyn+MAN, and FedDC+MAN 
respectively. Each client minimizes the activation norm as a regularizer in the algorithm and the cross-entropy loss, as shown in the below Eq.~\ref{sup:eq1}.
\begin{equation}
f_{k}(\mathbf{w}) \triangleq L_{k}(\mathbf{w}) + \zeta L_{k}^{act}(\mathbf{w})
\label{sup:eq1}
\end{equation}
$ L_{k}(\mathbf{w})$ denotes the task-specific loss in our case, it is (cross-entropy loss) and $ L_{k}^{act}(\mathbf{w})$ is the activation norm loss that is used to attain flatness and is described in detail in the Sec.3.2.3 of the main paper. The hyper-parameter $\zeta$ trades off between the flatness and the cross-entropy loss. In this way, it is straightforward to integrate the proposed regularizer 'MAN' into the existing FL algorithms. 
The complete details of individual algorithms can be found in FedAvg~\cite{mcmahan2017communication}, FedDyn~\cite{acar2021federated} and FedDC~\cite{Gao_2022_CVPR}. 
We can similarly extend the MAN to FedSAM/ASAM and FedSpeed as well. We simply add activation norms to the client loss function as in Eq.~\ref{sup:eq1}. 

\end{document}